\documentclass[hidelinks,onefignum,onetabnum]{siamart251216}

\usepackage{etoolbox}

\usepackage{lipsum}
\usepackage{amsfonts}
\usepackage{graphicx}
\usepackage{epstopdf}
\usepackage{makecell}
\usepackage{algorithm}
\usepackage{algpseudocode}
\usepackage{microtype}
\usepackage{amssymb}
\usepackage{graphicx}
\usepackage[colorinlistoftodos]{todonotes}
\usepackage{hyperref}
\usepackage{mathrsfs}
\usepackage{latexsym}
\usepackage{amsmath}
\usepackage{appendix}
\usepackage[cal=cm]{mathalfa}
\usepackage{epstopdf}
\usepackage{wrapfig}
\usepackage{float}
\usepackage{color}
\usepackage[percent]{overpic}
\usepackage{threeparttable}
\usepackage{booktabs}
\usepackage{multirow}
\usepackage{caption}
\usepackage{subfigure}
\usepackage{tabularx}
\usepackage{todonotes}
\newtheorem{assumption}{Assumption}
\usepackage[numbers]{natbib}
\ifpdf
  \DeclareGraphicsExtensions{.eps,.pdf,.png,.jpg}
\else
  \DeclareGraphicsExtensions{.eps}
\fi

\newsiamremark{remark}{Remark}
\newsiamremark{hypothesis}{Hypothesis}
\crefname{hypothesis}{Hypothesis}{Hypotheses}
\newsiamthm{claim}{Claim}
\newsiamremark{fact}{Fact}
\crefname{fact}{Fact}{Facts}
\allowdisplaybreaks
\newcommand{\red}{\textcolor{black}}
\newcommand{\cred}{\color{black}}
\newcommand{\blue}{\textcolor{black}}
\newcommand{\purp}{\textcolor{black}}
\newcommand{\bfw}{\mathbf{w}}
\newcommand{\bfeta}{\mathbf{\eta}}
\newcommand{\E}{\mathbb{E}}
\newcommand{\x}{\mathbf{x}}
\newcommand{\y}{\mathbf{y}}
\newcommand{\mH}{\mathbf{H}}

\headers{LAE-EnKF for Nonlinear Data assimilation}{X. T. Tong, Y. Wang and L. Yan}

\title{Latent Auto-encoder Ensemble Kalman Filter for Nonlinear Data Assimilation\thanks{Submitted to the editors DATE.
\funding{This work was supported by the NSF of China (Nos. 92370126, 12171085) and the Jiangsu Provincial Scientific Research Center of Applied Mathematics (grant BK20233002).}}}

\author{Xin T. Tong \thanks{Department of Mathematics, National University of Singapore, 119076, Singapore (xin.t.tong@nus.edu.sg). }
\and Yanyan Wang\thanks{School of Mathematics, Southeast University, Nanjing 210096, China (yanyanwang@seu.edu.cn).}
\and Liang Yan\thanks{School of Mathematics, Southeast University, Nanjing 210096, China (yanliang@seu.edu.cn).  }
}

\usepackage{amsopn}

\ifpdf
\hypersetup{
  pdftitle={Latent Autoencoder Ensemble Kalman Filter for Nonlinear Data Assimilation},
  pdfauthor={X. T.  Tong, Y. Wang and L. Yan}
}
\fi

\begin{document}

\maketitle
\begin{abstract}
The ensemble Kalman filter (EnKF) is widely used for data assimilation in high-dimensional systems, but its performance often deteriorates for strongly nonlinear dynamics due to the structural mismatch between the Kalman update and the underlying system behavior. In this work, we propose a latent autoencoder ensemble Kalman filter (LAE-EnKF) that addresses this limitation by reformulating the assimilation problem in a learned latent space with linear and stable dynamics. The proposed method learns a nonlinear encoder--decoder together with a stable linear latent evolution operator and a consistent latent observation mapping, yielding a closed linear state-space model in the latent coordinates. This construction restores compatibility with the Kalman filtering framework and allows both forecast and analysis steps to be carried out entirely in the latent space. Compared with existing autoencoder-based and latent assimilation approaches that rely on unconstrained nonlinear latent dynamics, the proposed formulation emphasizes structural consistency, stability, and interpretability.
We provide a theoretical analysis of learning linear dynamics on low-dimensional manifolds and establish generalization error bounds for the proposed latent model. Numerical experiments on representative nonlinear and chaotic systems demonstrate that the LAE-EnKF yields more accurate and stable assimilation than the standard EnKF and related latent-space methods, while maintaining comparable computational cost and data-driven.
\end{abstract}

\begin{keywords}
Data Assimilation; Ensemble Kalman Filter;  Latent  Space.
 \end{keywords}

\begin{MSCcodes}
68T07,37M05,65P99, 65C20
\end{MSCcodes}

\section{Introduction}
Data assimilation (DA) optimally integrates dynamical model predictions with noisy and incomplete observations  to estimate the evolving state of complex systems. In practice, it combines heterogeneous data sources, such as in-situ measurements, satellite retrievals, and radar observations, with either physics-based or data-driven models of the underlying dynamics. By providing a systematic framework for uncertainty‑aware state estimation,  DA serves as a foundational methodology across numerous scientific and engineering fields, including numerical weather prediction, climate science, geophysics, hydrology, and economics~\cite{Kalnay2002}.

 From a probabilistic perspective, DA can be formulated as a Bayesian filtering problem~\cite{Reich2015probability}, where the goal is to sequentially infer the posterior distribution of the system state conditioned on all available observations. Among approximate Bayesian filtering methods, the ensemble Kalman filter (EnKF)~\cite{evensen2009data} has become one of the most widely adopted techniques due to its favorable computational scaling, derivative-free implementation, and ensemble-based representation of uncertainty. 
\red{In the linear--Gaussian setting, the EnKF recovers the classical Kalman filter, which provides the optimal linear update for Gaussian systems, in the large-ensemble limit~\cite{Mandel2011}. For nonlinear systems, however, the large-ensemble limit of the EnKF generally does not recover the true filtering distribution~\cite{Ernst2015}. This limitation has been investigated from various theoretical perspectives, including mean-field and non-asymptotic analyses, which provide insight into the behavior of ensemble-based approximations in nonlinear settings~\cite{carrillo2024,AlGhattas2023resampling,SanzAlonso2023,SanzAlonsoWaniorek2025}. Despite these theoretical insights, the practical performance of the EnKF remains fundamentally constrained by finite ensemble sizes. Sampling errors in the empirical covariance can induce spurious long-range correlations, significantly degrading estimation accuracy. To alleviate these issues, techniques such as localization and covariance inflation are commonly employed~\cite{Tong2016stable,Tong2016nonlinear}. However, these remedies address sampling errors but not the underlying limitation. In nonlinear, partially observed, or non-Gaussian settings, the EnKF can become suboptimal and even diverge~\cite{kelly2015concrete}.} This degradation is not intrinsic to the Kalman update itself, but rather stems from a structural mismatch between the linear-Gaussian assumptions underlying the EnKF analysis step and the true nonlinear dynamics and observation processes. As a result, the posterior ensemble is effectively constrained to an affine subspace determined by local linearizations of the model and observation operator~\cite{Reich2015probability,liu2025dropout}, leading to biased estimates and loss of stability.

Extensive research has aimed to enhance the robustness of EnKF-type methods in nonlinear and non-Gaussian settings. Sigma-point filters, including the unscented and cubature Kalman filters, propagate carefully chosen samples through nonlinear models to improve moment approximations~\cite{Julier1997UKF,Arasaratnam2009CKF}. Particle filters offer greater flexibility by representing the posterior with weighted ensembles~\cite{Reich2015probability}, but they typically suffer from severe weight degeneracy in high-dimensional systems. \blue{More recently, generative approaches based on score and diffusion models have been proposed to directly model complex posterior distributions~\cite{Bao2024EnSF,Li2025SOAD,Rozet2023SDA}.} Alternative approaches reinterpret the Bayesian update as a deterministic transport or flow of probability measures~\cite{Spantini2022OT}, while hybrid variational–ensemble methods incorporate gradient information to better handle nonlinear observation operators~\cite{Buehner2010Hybrid}. Despite these advances, achieving accurate, stable, and scalable filtering for strongly nonlinear, high-dimensional systems remains a fundamental challenge.

Recently, deep learning has emerged as a powerful tool for addressing nonlinear DA problems~\cite{Arcucci2021DDA,Bocquet2021enkf,BUIZZA2022DAML}. Neural networks have been used to learn surrogate dynamical models, approximate observation operators, and correct model error directly from data~\cite{WANG2025CPC, WANG2026JCP, CHENG2022surrogate, Zhu2019DDA}. 
While some approaches iteratively retrain networks during the assimilation cycle~\cite{BRAJARD2020DAMLL96}, such strategies are often impractical for real-time sequential filtering and may lack stability and interpretability. These limitations have motivated the development of \emph{latent data assimilation} methods, which seek to reduce dimensionality and nonlinearity by performing DA in a learned low-dimensional latent space. One class of methods constructs surrogate dynamics in latent coordinates and incorporates observational corrections during time integration~\cite{LI2024LAINR,Peyron2021LA}. A complementary class embeds both system states and observations into a latent space and performs filtering updates directly in latent  variables~\cite{AKBARI2023latent,Amendola2021LA,cheng2023generalised}. However, in the presence of heterogeneous or strongly nonlinear observations, separate encoders are often required for states and observations, leading to mismatched latent representations and inconsistencies between forecast and analysis steps.

To address these issues, Chen et al. \cite{CHENG2024MEDLA} proposed the multi-domain encoder-decoder latent assimilation (MEDLA) framework, which maps heterogeneous observations into a unified latent space using multiple encoders and decoders. While MEDLA improves flexibility under diverse observation settings, its latent dynamics are governed by unconstrained nonlinear neural operators, which can limit interpretability and long-term stability. In parallel, Koopman operator theory offers a complementary perspective by representing nonlinear dynamical systems through linear evolution in a suitably lifted feature space \cite{Brunton2022Koopman,liu2024estimate}. Koopman autoencoders operationalize this idea by learning encoder–decoder pairs such that latent variables evolve under a learned linear operator~\cite{Azencot2020KAE,nayak2025temporally}. Crucially, linear latent dynamics provide improved interpretability, numerical stability, and structural compatibility with Kalman filtering.

Motivated by these observations, we propose in this paper the \emph{latent autoencoder ensemble Kalman filter} (LAE-EnKF), which reformulates the DA problem through a structure-preserving latent dynamical representation. The proposed framework learns a nonlinear encoder–decoder pair that maps high-dimensional physical states onto a compact latent manifold, while explicitly enforcing stable linear latent dynamics. In contrast to generic autoencoder-based DA methods, all forecast and analysis operations are carried out directly in the learned latent coordinates, where the Kalman update is structurally justified. The approach is fully data-driven and applicable in settings where the governing equations are partially known or unavailable. To accommodate heterogeneous and partially observed measurement settings, we further introduce an observation encoder that embeds observations into the same latent coordinate system as the state variables. This design yields a unified latent state-space model in which both the system dynamics and the observation process are linear in the latent variables, eliminating inconsistencies arising from mismatched latent representations. Owing to this structure, the intrinsic latent dimension required for accurate assimilation is often significantly smaller than the ambient physical dimension, leading to improved robustness and computational efficiency.

\red{On the theoretical side, we analyze the proposed framework under a smooth manifold assumption and establish generalization error bounds for both the latent linear dynamics and the observation encoder, while explicitly accounting for the effects of dynamical perturbations and observation noise.} Extensive numerical experiments on representative nonlinear and chaotic systems show that the proposed LAE-EnKF consistently improves assimilation accuracy, stability, and robustness compared with standard EnKF and existing autoencoder-assisted approaches, while maintaining favorable computational efficiency.

The remainder of this paper is organized as follows. Section~\ref{s2} formulates the data assimilation problem and reviews the EnKF. Section~\ref{s3} introduces the LAE-EnKF framework, including the latent representation, observation embedding, and filtering algorithm. Section~\ref{s4} presents the theoretical analysis. Section~\ref{s5} reports numerical results on several nonlinear and chaotic systems. Finally, Section~\ref{s6} concludes the paper and discusses future research directions.

\section{Data Assimilation}\label{s2}
\red{In this section, we introduce the problem formulation and the EnKF framework. For ease of presentation, we adopt the following notation. Lower-case letters denote scalars, bold lower-case letters denote vectors, and upper-case letters denote matrices. Calligraphic letters represent manifolds, sets, and function classes, while script letters denote neural network mappings. We use $\widehat{\cdot}$ to denote empirical or learned estimators, and $\widetilde{\cdot}$ to denote latent variables or approximation functions arising in the theoretical analysis.}


\subsection{Problem formulation}
We consider a discrete-time dynamical system of the form
\begin{equation}\label{forward}
\mathbf{x}_{k}
= \mathbf{F}\!\left(\mathbf{x}_{k-1}\right) + \mathbf{w}_k, 
\qquad 
\mathbf{w}_k \sim \mathcal{N}(\mathbf{0}, \mathbf{Q}_k),
\end{equation}
where $\mathbf{x}_{k} \in \mathbb{R}^{D}$ denotes the system state at time $t_k$, $\mathbf{F}$ is a generally nonlinear evolution operator, and $\mathbf{w}_k$ represents model error.

In many practical applications, such as numerical weather prediction and ocean circulation modeling, the full state $\mathbf{x}_k$ is not directly observable~\cite{Reich2015probability}. Instead, at each time $t_k$, only partial and noisy observations are available:
\begin{equation}\label{obs}
\mathbf{y}_k = \mathcal{H}(\mathbf{x}_{k}) + \boldsymbol{\eta}_k,
\qquad 
\boldsymbol{\eta}_k \sim \mathcal{N}(\mathbf{0}, \mathbf{\Gamma}_k),
\end{equation}
where $\mathcal{H}$ is the observation operator and $\boldsymbol{\eta}_k$ denotes observation noise.

Given observations $Y_k = (\mathbf{y}_1, \ldots, \mathbf{y}_k)$, the objective of data assimilation is to approximate the filtering distribution $p(\mathbf{x}_k \mid Y_k)$. For nonlinear and high-dimensional systems, evaluating this posterior exactly is intractable~\cite{Reich2015probability}, motivating the development of approximate Bayesian filtering methods.

\subsection{Ensemble Kalman filter}\label{s2.2}

The ensemble Kalman filter (EnKF)~\cite{evensen2009data} is one of the most widely used approximate Bayesian filtering methods for large-scale systems. It represents uncertainty using a finite ensemble and propagates empirical means and covariances without requiring explicit  derivatives of the dynamical model. Owing to its computational efficiency and scalability, the EnKF has become a standard tool in high-dimensional data assimilation.

At each assimilation cycle, the EnKF alternates between a forecast step and an analysis step. We summarize the standard formulation below to highlight the structural assumptions underlying the method, which are central to the developments in this paper.

\textit{Initialization.}  
Given a prior mean $\mathbf{m}_0$ and covariance $\mathbf{P}_0$, an initial ensemble
\[
\widehat{\mathbf{x}}_0^{(1)}, \ldots, \widehat{\mathbf{x}}_0^{(N_e)}
\]
is drawn from the prior distribution. The forecast and analysis steps are then applied sequentially for $k = 1, \ldots, K$.

\textit{Forecast step.}  
Each ensemble member is propagated forward according to the nonlinear dynamical model~\eqref{forward}:
\begin{equation*}
\widehat{\mathbf{x}}_{k|k-1}^{(j)}  = \mathbf{F}\!\left(\widehat{\mathbf{x}}_{k-1}^{(j)}\right) + \mathbf{w}_k^{(j)}, \qquad j = 1, \ldots, N_e,
\end{equation*}
where $\widehat{\mathbf{x}}_{k|k-1}^{(j)}$ denotes the forecast of the $j$-th ensemble member at time $t_k$ conditioned on observations up to $t_{k-1}$. The corresponding predicted observations are
\[
\widehat{\mathbf{y}}_k^{(j)} = \mathcal{H}\!\left(\widehat{\mathbf{x}}_{k|k-1}^{(j)}\right).
\]
The sample means are computed as
\[
\overline{\mathbf{x}}_{k|k-1} 
= \frac{1}{N_e} \sum_{j=1}^{N_e} \widehat{\mathbf{x}}_{k|k-1}^{(j)}, 
\qquad
\overline{\mathbf{y}}_k 
= \frac{1}{N_e} \sum_{j=1}^{N_e} \widehat{\mathbf{y}}_k^{(j)}.
\]
The sample cross-covariance and observation covariance are given by
\begin{align*}
\widehat{\mathbf{P}}_{xy} &= \frac{1}{N_e - 1} \sum_{j=1}^{N_e} 
(\widehat{\mathbf{x}}_{k|k-1}^{(j)} - \overline{\mathbf{x}}_{k|k-1}) (\widehat{\mathbf{y}}_k^{(j)} - \overline{\mathbf{y}}_k)^{\!\top},\\  
\widehat{\mathbf{P}}_{yy}  &= \frac{1}{N_e - 1} \sum_{j=1}^{N_e}  (\widehat{\mathbf{y}}_k^{(j)} - \overline{\mathbf{y}}_k) (\widehat{\mathbf{y}}_k^{(j)} - \overline{\mathbf{y}}_k)^{\!\top}.
\end{align*}

\textit{Analysis step.}  
The Kalman gain is approximated by
\[
\widehat{\mathbf{K}}_k 
= \widehat{\mathbf{P}}_{xy}
\bigl(\widehat{\mathbf{P}}_{yy} + \mathbf{\Gamma}_k\bigr)^{-1},
\]
and the analysis ensemble is updated using the perturbed-observation formulation:
\[
\widehat{\mathbf{x}}_k^{(j)} 
= \widehat{\mathbf{x}}_{k|k-1}^{(j)} 
+ \widehat{\mathbf{K}}_k
\bigl(\mathbf{y}_k + \boldsymbol{\eta}_k^{(j)} - \widehat{\mathbf{y}}_k^{(j)}\bigr),
\qquad
\boldsymbol{\eta}_k^{(j)} \sim \mathcal{N}(\mathbf{0}, \mathbf{\Gamma}_k).
\]

The EnKF thus provides an efficient and scalable approximation to Bayesian filtering, particularly for high-dimensional systems. For linear dynamics and Gaussian noise, it converges to the classical Kalman filter as the ensemble size increases. In nonlinear settings, however, the analysis step enforces an \emph{affine} update based on sample covariances, implicitly assuming a locally linear and Gaussian posterior structure. As a result, the updated ensemble is restricted to the affine subspace spanned by the forecast ensemble, regardless of the true posterior geometry.
This structural mismatch between the Kalman update and the nonlinear state space can lead to biased estimates, loss of ensemble diversity, and filter instability or divergence~\cite{Reich2015probability,Tong2016nonlinear}. These limitations motivate a representation-based reformulation of data assimilation, in which the system dynamics and observation process are embedded into a space where linear-Gaussian assumptions are more appropriate. In the next section, we introduce a latent-space framework that achieves this goal and enables ensemble Kalman filtering to be applied in a structurally consistent manner.

\section{Latent-Space Ensemble Kalman Filter}\label{s3}

As discussed in Section~\ref{s2}, the main limitation of the ensemble Kalman filter in nonlinear settings is not the Kalman update itself, but the mismatch between its linear--Gaussian structure and the nonlinear geometry of the physical state space. Rather than modifying the Kalman update, we adopt a representation-driven perspective and seek a transformation of the state and observation variables under which the assumptions of Kalman filtering are approximately satisfied.

Specifically, we reformulate data assimilation in a learned latent space where (i) the system dynamics evolve approximately linearly under a stable operator, and (ii) observations admit a consistent linear representation. This latent formulation restores structural compatibility with ensemble Kalman filtering, allowing forecast and analysis steps to be carried out in a principled manner while retaining the computational efficiency of ensemble-based methods.

\subsection{Latent autoencoder assimilation}

Let $\mathbf{x}_k \in \mathbb{R}^D$ denote the physical state at time $t_k$. We introduce a nonlinear encoder $\mathscr{E} : \mathbb{R}^D \to \mathbb{R}^n,$
which maps the  physical state to a low-dimensional latent variable
\begin{equation}\label{encoder}
\mathbf{z}_k = \mathscr{E}(\mathbf{x}_k),
\end{equation}
together with a decoder
$
\mathscr{D} : \mathbb{R}^n \to \mathbb{R}^D,
$
which reconstructs the physical state via
\begin{equation}\label{decoder}
\widehat{\mathbf{x}}_k = \mathscr{D}(\mathbf{z}_k).
\end{equation}

The central modeling assumption is that, although the physical dynamics are nonlinear, their evolution can be approximated by a \emph{linear and stable} operator in a suitably chosen latent representation. Accordingly, we impose the latent transition model
\begin{equation}\label{latent_linear}
\mathbf{z}_k = \mathbf{A}\mathbf{z}_{k-1},
\end{equation}
\red{where $\mathbf{A} \in \mathbb{R}^{n \times n}$ is a trainable linear operator, parameterized as a weight matrix and learned jointly with the encoder and decoder through  gradient-based optimization.} This construction is closely related to Koopman-based representations \cite{Azencot2020KAE,nayak2025temporally}, in which nonlinear dynamics are lifted to linear evolution in a learned feature space. Crucially, the explicit linear structure of~\eqref{latent_linear} is not merely a modeling convenience, but a deliberate design choice that enables structural compatibility with Kalman filtering.

To avoid inconsistencies between forecast and analysis steps, the observation process is embedded into the same latent coordinate system. Let $\mathbf{y}_k \in \mathbb{R}^{D_y}$ denote the observation at time $t_k$. An observation encoder
$
\mathscr{E}_{\mathrm{obs}} : \mathbb{R}^{D_y} \to \mathbb{R}^m
$
maps observations to latent variables
\begin{equation}\label{encoder_obs}
\widetilde{\mathbf{y}}_k = \mathscr{E}_{\mathrm{obs}}(\mathbf{y}_k),
\end{equation}
which are related to the latent state through the linear model
\begin{equation}\label{latent_obs}
\widetilde{\mathbf{y}}_k = \mathbf{H}\mathbf{z}_k + \widetilde{\boldsymbol{\eta}}_k,
\qquad
\widetilde{\boldsymbol{\eta}}_k \sim \mathcal{N}(\mathbf{0}, \widetilde{\Gamma}_k).
\end{equation}
\blue{Here $\mathbf{H} \in \mathbb{R}^{m \times n}$ denotes a linear observation operator in the latent space, which can either be specified a priori or learned jointly with the observation encoder $\mathscr{E}_{\mathrm{obs}}$.} By embedding both states and observations into a common latent space, we obtain a closed linear state-space system in the latent variables. This construction eliminates the latent-space mismatch that arises when state evolution and observation mapping are learned independently, and provides a consistent foundation for Kalman-type updates.



\subsection{Network architecture and learning framework}\label{s3.2}

The latent autoencoder (LAE) framework consists of a state encoder $\mathscr{E}$, a decoder $\mathscr{D}$, a latent linear transition operator $\mathbf{A}$, an observation encoder $\mathscr{E}_{\mathrm{obs}}$, \blue{and a latent observation operator $\mathbf{H}$,} as illustrated in Fig.~\ref{fig1}. The central objective is to learn a low-dimensional representation that is expressive enough to capture nonlinear system dynamics, while enforcing a structure that is compatible with linear--Gaussian filtering. In particular, the learned latent representation is designed so that the system evolution and observation process admit linear models, thereby restoring structural consistency with ensemble Kalman filtering.

\begin{figure}[htbp] 
	\centering  
		\begin{overpic}[width=0.6\linewidth, clip=true,tics=10]{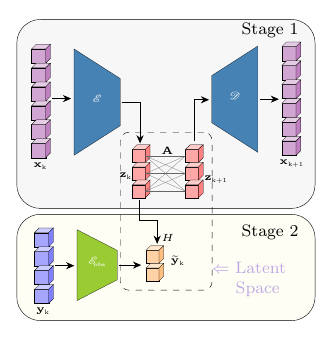}\end{overpic}
	\caption{The architecture of latent autoencoder framework. \blue{It consists of a state encoder $\mathscr{E}$ that maps physical states to a latent space, a decoder $\mathscr{D}$ that reconstructs states from latent representations, a linear transition operator $\mathbf{A}$ that governs latent dynamics, an observation encoder $\mathscr{E}_{\mathrm{obs}}$ that embeds observations into the latent space, and a latent observation operator $\mathbf{H}$, which may be specified a priori or learned.}}
	\label{fig1}
\end{figure}

To this end, we adopt a two-stage training strategy. The first stage focuses on learning a stable latent linear dynamical system from state trajectories. The second stage aligns observations with the learned latent state, ensuring consistency between forecast and analysis steps in the subsequent filtering procedure.

\paragraph{Stage I: learning stable latent linear dynamics}
Given $N_{\mathrm{traj}}$ training trajectories $\{\mathbf{x}_k^{(i)}\}$ of length $K$, the encoder $\mathscr{E}$, decoder $\mathscr{D}$, and latent transition matrix $\mathbf{A}$ are trained jointly by minimizing
\begin{align}\label{loss1}
\mathcal{L}_{\mathrm{lat}}(\mathscr{E},\mathscr{D},\mathbf{A}) = \frac{1}{N_{\text{traj}}K} \sum_{i=1}^{N_{\text{traj}}} \sum_{k=0}^{K-1} \Big(
&\lambda_{\mathrm{rec}} \|\mathscr{D}(\mathscr{E}(\mathbf{x}_k^{(i)})) - \mathbf{x}_k^{(i)}\|_2^2 \notag\\
&+ \lambda_{\mathrm{pred}} \|\mathscr{D}(\mathbf{A}\mathscr{E}(\mathbf{x}_k^{(i)})) - \mathbf{x}_{k+1}^{(i)}\|_2^2 \notag\\
&+ \lambda_{\mathrm{latent}} \|\mathbf{A}\mathscr{E}(\mathbf{x}_k^{(i)}) - \mathscr{E}(\mathbf{x}_{k+1}^{(i)})\|_2^2
\Big)+ \lambda_{\mathrm{reg}} R(\mathbf{A}),
\end{align}
where the regularization term
\[
R(\mathbf{A}) = \bigl(\max\{0, \|\mathbf{A}\|_2 - 1\}\bigr)^2,
\]
penalizes violations of a spectral norm constraint and promotes stability of the latent linear dynamics. The reconstruction term $\| \mathscr{D}(\mathscr{E}(\mathbf{x}_k^{(i)})) - \mathbf{x}_k^{(i)} \|$  enforces that the encoder--decoder pair provides a faithful representation of the physical state, ensuring that essential structures are preserved when mapping between the physical and latent spaces. The prediction term $\|\mathscr{D} (\mathbf{A}\mathscr{E}(\mathbf{x}_k^{(i)})) - \mathbf{x}_{k+1}^{(i)} \|$ encourages the decoded latent state, propagated linearly by $\mathbf{A}$, to reproduce the one-step evolution of the physical system. The latent consistency term $\| \mathbf{A}\mathscr{E}(\mathbf{x}_k^{(i)}) - \mathscr{E}(\mathbf{x}_{k+1}^{(i)})\|$ aligns the linearly propagated latent variable with the encoded latent representation of the subsequent state, thereby enforcing coherence between the learned linear dynamics and the underlying nonlinear evolution. Together, these terms ensure that the latent variables evolve approximately linearly under a stable operator, a property that is crucial for reliable sequential filtering.

\paragraph{Stage II: learning a consistent latent observation embedding}
After the latent dynamics have been identified, the encoder $\mathscr{E}$, decoder $\mathscr{D}$, and transition operator $\mathbf{A}$ are kept fixed. In the second stage, an observation encoder $\mathscr{E}_{\mathrm{obs}}$ is trained to embed observations into the same latent coordinate system. \blue{We consider a linear observation model in the latent space with operator $\mathbf{H}$, which can either be specified a priori or learned jointly with $\mathscr{E}_{\mathrm{obs}}$, and define the observation alignment loss}
\begin{equation}\label{loss2}
\mathcal{L}_{\mathrm{obs}}(\mathscr{E}_{\mathrm{obs}}) = \frac{1}{N_{\mathrm{traj}}(K+1)} \sum_{i=1}^{N_{\mathrm{traj}}} \sum_{k=0}^{K} \|\mathscr{E}_{\mathrm{obs}}(\mathbf{y}_k^{(i)}) - \mathbf{H}\mathscr{E}(\mathbf{x}_k^{(i)})\|_2^{2}\, .
\end{equation}
\blue{This objective enforces consistency between the observation encoder and the latent representation induced by $\mathscr{E}$ and $\mathbf{H}$. It can be interpreted as a regression objective toward the latent observation model, rather than a reconstruction requirement. In particular, it does not require observations to uniquely determine the underlying state, but only to produce a latent embedding compatible with the linear observation structure. When $\mathbf{H}$ is learned jointly with $\mathscr{E}_{\mathrm{obs}}$, a norm-based regularization on $\mathbf{H}$ is needed to mitigate scaling ambiguity and to ensure the stability.}

Through this two-stage training procedure, the LAE framework learns a stable latent linear dynamical system together with a consistent embedding of states and observations. The result is a closed linear state-space model in the latent variables, in which the assumptions underlying Kalman filtering are approximately satisfied. In the next subsection, we integrate the ensemble Kalman filter directly into this learned latent representation, leading to the proposed LAE-EnKF.

\subsection{Latent autoencoder ensemble Kalman filter}\label{s3-3}

Once training is completed, the learned latent dynamics and observation embeddings define a linear state-space model in the latent variables. The ensemble Kalman filter can therefore be applied directly in the latent space, where its structural assumptions are approximately satisfied.

In the proposed LAE-EnKF, the forecast step propagates latent ensemble members using the learned linear operator $\mathbf{A}$, while the analysis step performs Kalman-type updates using latent observations generated by $\mathscr{E}_{\mathrm{obs}}$. The updated latent ensemble is then decoded to recover full-order state estimates in the physical space. A detailed algorithmic description is provided in Algorithm~\ref{alg1}.

\begin{algorithm}[htbp]
	\caption{Latent Autoencoder Ensemble Kalman Filter (LAE-EnKF)}
	\label{alg1}
	\begin{algorithmic}[1]
	\Require {
		Training data $\{(\mathbf{x}_k,\mathbf{y}_k)\}$; 
		initial physical ensemble $\{\mathbf{x}_0^{(j)}\}_{j=1}^{N_e}$.
	}
	\Ensure {
		State estimates $\{\widehat{\mathbf{x}}_k\}$, trained $\mathscr{E}$, $\mathscr{D}$, 
		transition matrix $\mathbf{A}$, and observation encoder $\mathscr{E}_{\mathrm{obs}}$.
	}
	\State {\textbf{Offline training (LAE model)}}
	\State Train $(\mathscr{E},\mathscr{D},\mathbf{A})$ by minimizing the empirical latent loss ${\mathcal{L}}_{\mathrm{lat}}$ in~\eqref{loss1}.
	\State Train $\mathscr{E}_{\mathrm{obs}}$ by minimizing the observation-alignment loss ${\mathcal{L}}_{\mathrm{obs}}$ in~\eqref{loss2}.
	
	\State {\textbf{Online assimilation (LAE-EnKF)}}
	\State \textbf{Initialization:}  $\widehat{\mathbf{z}}_0^{(j)} = \mathscr{E}(\mathbf{x}_0^{(j)}), 
	\quad j = 1,\dots,N_e.$
	
	\For{$k = 1,2,\dots,K$}
	\State 	\textbf{Forecast step:} $\widehat{\mathbf{z}}_{k|k-1}^{(j)} 
		= \mathbf{A}\,\widehat{\mathbf{z}}_{k-1}^{(j)},\quad j = 1,\cdots,N_e$.
		
		\State Compute latent observation: 
		$\widehat{\mathbf{y}}_k^{(j)} = \mathbf{H}\purp{\widehat{\mathbf{z}}}_{k|k-1}^{(j)}, \quad j=1,\cdots,N_e$. 	
		\State Compute latent forecast mean:
		$\overline{\mathbf{z}}_{k|k-1} = \frac{1}{N_e}\sum\limits_{j=1}^{N_e}\widehat{\mathbf{z}}_{k|k-1}^{(j)}$. 
	\State	Compute latent observation mean:
		$\overline{\mathbf{y}}_{k} = \frac{1}{N_e}\sum\limits_{j=1}^{N_e}\widehat{\mathbf{y}}_k^{(j)} $. 
	\State	Compute empirical latent covariances:
		\[
		\mathbf{P}_{zy} = \frac{1}{N_e-1}\!\sum\limits_{j=1}^{N_e}
		(\widehat{\mathbf{z}}_{k|k-1}^{(j)} - \overline{\mathbf{z}}_{k|k-1})
		(\widehat{\mathbf{y}}_k^{(j)} - \overline{\mathbf{y}}_k)^\top,
		\]
		\[
		\mathbf{P}_{yy} = \frac{1}{N_e-1}\!\sum\limits_{j=1}^{N_e}(\widehat{\mathbf{y}}_k^{(j)} - \overline{\mathbf{y}}_k)(\widehat{\mathbf{y}}_k^{(j)} -\overline{\mathbf{y}}_k)^\top.
		\]
		\State Encode observation: 
		$\widetilde{\mathbf{y}}_k = \mathscr{E}_{\mathrm{obs}}(\mathbf{y}_k).$ 
	\State	\textbf{Analysis step:}
		$$\widehat{\mathbf{z}}_k^{(j)} = \widehat{\mathbf{z}}_{k|k-1}^{(j)} + \mathbf{P}_{zy}\big(\mathbf{P}_{yy} + \widetilde{\Gamma}\big)^{-1} \big( \widetilde{\mathbf{y}}_k + \widetilde{\boldsymbol{\eta}}_k^{(j)} - \widehat{\mathbf{y}}_k^{(j)}  \big),\quad j=1,\cdots,N_e.$$ 
	\EndFor
	\State \textbf{Reconstruction:} $
	\widehat{\mathbf{x}}_k = \mathscr{D}\!\left( \frac{1} {N_e}\sum\limits_{j=1}^{N_e}\widehat{\mathbf{z}}_k^{(j)}\right).
$
	\end{algorithmic}
\end{algorithm}

By shifting data assimilation from the nonlinear physical space to a compact latent space governed by stable linear dynamics, the LAE-EnKF preserves the scalability and efficiency of the classical EnKF while avoiding the structural mismatch that limits its performance in nonlinear settings. This representation-driven approach provides a principled pathway for extending ensemble Kalman filtering to high-dimensional nonlinear systems.

\section{Theoretical Analysis}\label{s4}
In this section, we provide a performance guarantee of the proposed LAE. Our LAE is motivated by the manifold hypothesis \cite{Tenenbaum2000science, khoo2024temporal}, which suggests that although the full-order states $\mathbf{x}\in \mathbb{R}^{D}$ is high-dimensional, the set of dynamically attainable states concentrates on a low-dimensional geometric structure. In particular, we assume that the data lie on an $n$-dimensional compact smooth Riemannian manifold $\mathcal{M}$ that is isometrically embedded in the ambient space $\mathbb{R}^{D}$ \cite{lee2006riemannian}. The compactness of $\mathcal{M}$ implies uniform boundedness, there exists $B>0$ such that
\[
\|\mathbf{x}\|_\infty \le B ,\qquad\forall \mathbf{x} \in \mathcal M.
\]
Furthermore, we assume that the manifold $\mathcal{M}$ admits a global smooth parametrization, so that it can be represented by a single coordinate system. This corresponds to the single-chart setting in the chart autoencoder framework of \cite{LIU2024DAE}. Within this framework, the encoder--decoder pair $(\mathscr{E},\mathscr{D})$ can be viewed as approximating a global chart and its inverse on $\mathcal{M}$, while the latent operator $\mathbf{A}$ acts on a Euclidean representation of the underlying manifold.

To derive an upper bound on the generalization error for the LAE, we impose the following assumptions. 
\begin{assumption}\label{assum1}
	Let $\mathcal{M} \subset \mathbb{R}^{D}$ be a compact $n$-dimensional $C^{2}$ submanifold. Assume that $\mathcal M$ admits a global bi-Lipschitz parameterization $\varphi:\mathcal M \rightarrow [-\Lambda,\Lambda]^{n}$ \red{for some $\Lambda>0$,} with inverse $\phi=\varphi^{-1}:\varphi(\mathcal M)\rightarrow\mathcal M$. That is, there exist constants $C_{\varphi},C_{\phi}>0$ such that for all $\mathbf{v}_1,\mathbf{v}_2\in\mathcal M$,
	\[
	C_{\varphi}^{-1}\|\mathbf{v}_1-\mathbf{v}_2\|_2 \le \|\varphi(\mathbf{v}_1)-\varphi(\mathbf{v}_2)\|_2 \le C_{\varphi}\|\mathbf{v}_1-\mathbf{v}_2\|_2,
	\]
	and
	\[
	\|\phi(\mathbf{z}_1)-\phi(\mathbf{z}_2)\|_2 \le C_{\phi}\|\mathbf{z}_1-\mathbf{z}_2\|_2, \qquad \mathbf{z}_1,\mathbf{z}_2\in\varphi(\mathcal M).
	\]
	Furthermore, assume that the one-step dynamics $\mathbf F:\mathcal M\to\mathbb R^D$ admits a latent linear representation, i.e., there exists a matrix $\mathbf A^\star\in\mathbb R^{n\times n}$ with $\|\mathbf A^\star\|_2\le\rho$ for a fixed constant $\rho\le 1$, such that
	\[
	\mathbf F(\mathbf x)=\phi\!\left(\mathbf A^\star\varphi(\mathbf x)\right),
	\qquad \forall\,\mathbf x\in\mathcal M .
	\]
\end{assumption}

We assume that all physical states used for training satisfy $\mathbf{x}_k \in \mathcal{M}$. Although the trajectory $\{\mathbf{x}_k\}$ is generated by an underlying time-dependent Markov process, we further assume that the sampling interval exceeds the mixing time of the dynamics so that statistical dependence across samples becomes negligible, while we remark that our learning result may hold even for non i.i.d. data source (see \cite{che2024stochastic} and the references within). Under this standard assumption in learning theory for dynamical systems, the data may be treated as approximately independent for the purpose of deriving finite-sample generalization bounds.

\begin{assumption}\label{iid_sampling}
Let $\{(\mathbf{x},\mathbf{x}^{+})\}_{k=1}^N$ denote the input-target pairs, where \red{$\x_k^{+} = \mathbf{F}(\x_k) + \bfw_k$ and $\bfw_k$ denotes an additive perturbation term, assumed to be i.i.d., independent of $\mathbf{x}_k$, with $\mathbb{E}[\mathbf{w}_k]=0$ and $\mathbb{E}\|\mathbf{w}_k\|_2^2 < \infty$. We assume that $(\mathbf{x},\mathbf{x}^+)$ are sampled i.i.d. from a joint distribution $\mathbb{P}_{X,X^+}$ supported on a neighborhood of $\mathcal{M}\times\mathcal{M}$.} 
\end{assumption}

For the loss function \eqref{loss1}, the weights $\lambda_{\mathrm{rec}}, \lambda_{\mathrm{pred}}, \lambda_{\mathrm{latent}}, \lambda_{\mathrm{reg}}$ are tuning parameters used in practice to balance the relative scales of reconstruction, one-step prediction, latent consistency, and stability regularization. \red{From a theoretical perspective,  they act as positive multiplicative constants and can be absorbed into the leading constants of the generalization bound, and hence do not affect the convergence rate. Without loss of generality, we set $\lambda_{\mathrm{rec}}=\lambda_{\mathrm{pred}}=\lambda_{\mathrm{latent}}=\lambda_{\mathrm{reg}}=1$ and consider the simplified population loss:}
\begin{align}\label{loss_pop}
	\mathcal{L}_{\mathrm{lat}}^{\mathrm{pop}}(\mathscr{E},\mathscr{D},\mathbf{A}) :=\mathbb{E}_{(\mathbf{x},\mathbf{x}^+)\sim\mathbb{P}_{X,X^+}} \Big[ & \| \mathscr{D} (\mathbf{A}\mathscr{E} (\mathbf{x})) - \mathbf{F}(\mathbf{x}) \red{-\bfw} \|_2^2 + \|\mathscr{D}(\mathscr{E}(\mathbf{x})) -\mathbf{x}\|_2^2 \notag\\
	&+ \|\mathbf{A}\mathscr{E}(\mathbf{x})-\mathscr{E}(\mathbf{F}(\mathbf{x})\red{+\bfw})\|_2^2+R(\mathbf{A}) \Big].
\end{align}

\red{Let $\mathcal{S} = \{(\x_i, \x_i^+, \y_i)\}_{i=1}^N$ denote i.i.d. samples drawn from a joint distribution $\mathbb{P}_{X,X^+,Y}$ induced by the underlying dynamics and the observation model.} Under these assumptions, we obtain the following finite-sample upper bound for the squared generalization error of the LAE.
\begin{theorem}\label{thm1}
	Suppose that Assumptions~\ref{assum1} and~\ref{iid_sampling} hold. Let $(\widehat{\mathscr{E}},\,\widehat{\mathbf{A}},\,\widehat{\mathscr{D}})$ denote a global minimizer of the empirical objective~\eqref{loss1} with all $\lambda\equiv 1$. Assume that the encoder--decoder class has sufficient approximation capacity to realize a global smooth parametrization of the manifold $\mathcal{M}$ and its inverse. Then \red{for any fixed $C_1>1$}, the squared generalization error satisfies
	\begin{equation}\label{loss1_sge}
		\mathbb{E}_{\mathcal{S}} \!\left[ \mathcal{L}_{\mathrm{lat}}^{\mathrm{pop}} (\widehat{\mathscr{E}},\widehat{\mathscr{D}},\widehat{\mathbf{A}}) \right] \le \red{C_0} D^{2}\log^{2} D N^{-\frac{2}{n+2}}\log^{4} N + \red{C_1(1+C_1^2C^2_{\varphi})\E\|\bfw\|_2^2},
	\end{equation}
	for some constant $\red{C_0}>0$ depending only on $n$, $\red{C_1}$, $B$, $\Lambda$, $\rho$, the Lipschitz constants of the parametrization maps, and the volume of $\mathcal{M}$.
\end{theorem}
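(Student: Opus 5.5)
The plan follows the standard nonparametric-regression template for deep autoencoders of \cite{LIU2024DAE}: split the expected population risk into an approximation (bias) term and a statistical (variance) term, and treat the noise $\bfw$ separately. Write $\widehat{\mathcal L}$ for the empirical objective \eqref{loss1} with all $\lambda\equiv 1$.

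\textbf{Decomposition.} For any fixed $\alpha>0$,
\[
\mathbb{E}_{\mathcal S}\mathcal{L}^{\mathrm{pop}}_{\mathrm{lat}}(\widehat{\mathscr E},\widehat{\mathscr D},\widehat{\mathbf A})
=\mathbb{E}_{\mathcal S}\bigl[\mathcal{L}^{\mathrm{pop}}_{\mathrm{lat}}-(1+\alpha)\widehat{\mathcal L}\bigr](\widehat{\cdot})+(1+\alpha)\,\mathbb{E}_{\mathcal S}\widehat{\mathcal L}(\widehat{\cdot}),
\]
and the $(1+\alpha)$ will eventually become $C_1$. For the second term, global minimality gives $\widehat{\mathcal L}(\widehat{\cdot})\le\widehat{\mathcal L}(\widetilde{\mathscr E},\widetilde{\mathscr D},\mathbf A^\star)$ for any comparison triple in the network class. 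Taking expectations replaces this by the population loss of the comparison triple.

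\textbf{Comparison triple.} Choose networks $\widetilde{\mathscr E}\approx\varphi$ and $\widetilde{\mathscr D}\approx\phi$ with uniform error $\varepsilon$ on (a neighborhood of) $\mathcal M$ and $[-\Lambda,\Lambda]^n$. These come from ReLU approximation of Lipschitz functions of $n$ variables: $\phi$ is Lipschitz on an $n$-cube, and $\varphi$ is handled via the manifold approximation results underlying \cite{LIU2024DAE}. The network size is $O(\varepsilon^{-n}\log(1/\varepsilon))$ per output coordinate, with $D$ outputs. Keep $\mathbf A=\mathbf A^\star$, so $R(\mathbf A^\star)=0$ because $\rho\le1$.

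Now expand each loss term using $\mathbf F=\phi(\mathbf A^\star\varphi)$.
\begin{itemize}
\item Reconstruction: $\|\widetilde{\mathscr D}\widetilde{\mathscr E}\x-\x\|\lesssim(1+C_\phi)\varepsilon$.
\item Prediction: $\|\widetilde{\mathscr D}(\mathbf A^\star\widetilde{\mathscr E}\x)-\mathbf F(\x)-\bfw\|^2\le(1+\beta)O(\varepsilon^2)+(1+\beta^{-1})\|\bfw\|^2$. The cross term actually vanishes in expectation, since $\bfw$ is independent of $\x$ with mean zero, so this gives $O(\varepsilon^2)+\E\|\bfw\|^2$.
\item Latent consistency: $\|\mathbf A^\star\widetilde{\mathscr E}\x-\widetilde{\mathscr E}(\mathbf F\x+\bfw)\|$. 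Here the noise enters through the Lipschitz constant of the encoder, roughly $C_\varphi\|\bfw\|$. Applying Young's inequality with parameter tied to $C_1$ produces a bound of the form $C_1^2C_\varphi^2\E\|\bfw\|^2+O(\varepsilon^2)$.
\end{itemize}
Multiplying by the prefactor $C_1$ gives exactly the noise term $C_1(1+C_1^2C_\varphi^2)\E\|\bfw\|^2$. It is essential that the approximating encoder be constructed with a controlled Lipschitz constant (comparable to $C_\varphi$), e.g.\ by Lipschitz-preserving constructions or by clipping, and that noisy points $\mathbf F\x+\bfw$ off $\mathcal M$ be handled by extending $\varphi$ Lipschitzly to $\mathbb R^D$ (Kirszbraun/McShane, constant $\sqrt n\,C_\varphi$ absorbed into $C_0$).

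\textbf{Statistical term.} Truncate the network outputs at level $B$ (decoder) and $\Lambda$ (encoder), and bound the parameter $\mathbf A$ in a compact set, so each loss summand is bounded by a constant times $D$ (unbounded $\bfw$ is handled by truncation, or by assuming the support is bounded as in Assumption~\ref{iid_sampling}). Then apply the standard oracle inequality for least-squares/ERM via covering numbers (Gy\"orfi et al.\ style): $\mathbb{E}[\mathcal L^{\mathrm{pop}}-(1+\alpha)\widehat{\mathcal L}]\lesssim\frac{D^2}{N}\log\mathcal N(\delta,\mathcal F,\|\cdot\|_\infty)+\delta$. The covering number of the composed class is bounded through the pseudo-dimension of ReLU networks, $\log\mathcal N\lesssim S L\log(\cdot)\log N$, where $S\sim D\varepsilon^{-n}\log(1/\varepsilon)$ and $L\sim\log(1/\varepsilon)$. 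The compositions $\mathscr D\circ\mathbf A\circ\mathscr E$ are again networks whose depth and size add up.

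\textbf{Balancing, and the main obstacle.} Balancing $\varepsilon^2$ against $D^2\varepsilon^{-n}\log^2(\cdot)\,\log N/N$ gives $\varepsilon\sim N^{-1/(n+2)}$. This yields $D^2\log^2D\,N^{-2/(n+2)}\log^4N$, with the $\log D$ factors coming from the dimension dependence of the covering argument. I expect the main obstacle to be the noise handling in the latent-consistency and prediction terms. The first difficulty is keeping the encoder network's Lipschitz constant under control off the manifold, so that $C_\varphi$ (rather than an uncontrolled network constant) multiplies $\E\|\bfw\|^2$. The second is that the statistical term must not inherit the unbounded noise. I would first try to bound all network outputs by clipping, and to use the independence of $\bfw$ from $\x$ to cancel cross terms before invoking Young's inequality.
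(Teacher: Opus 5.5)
Your outline matches the paper's proof in structure. Both use the same split into $C_1$ times the empirical risk of the minimizer plus (population risk minus $C_1$ times empirical risk). Both compare against the triple $(\widetilde{\mathscr E},\mathbf A^\star,\widetilde{\mathscr D})$ with $R(\mathbf A^\star)=0$, use independence of $\bfw$ to remove the cross term in the prediction loss, and apply Young's inequality with $C_1$-dependent weights. Both then use a covering-number oracle inequality for the remainder and set $\delta=\varepsilon=N^{-1/(n+2)}$.

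Two of your deviations are cleaner than the appendix. First, you apply minimality to the whole objective at one comparison triple; the appendix bounds each empirical term by its own infimum, which is only justified for the sum. Second, you cover $\mathscr D\circ\mathbf A\circ\mathscr E$ as a single ReLU network, whereas the paper multiplies component covers and in doing so treats $C_\phi$ as a Lipschitz constant of an arbitrary decoder network in the class.

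There is one bookkeeping slip. Each summand is $O(DB^2)$, so the oracle-inequality prefactor is $DB^2/N$, not $D^2/N$. Since $\log\mathcal N$ already carries a factor $D$, your version would end at $D^3$ rather than $D^2\log^2 D$.

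The step that, as you set it up, does not deliver the stated bound is the noise in the latent-consistency term, which you rightly flag as the main obstacle.
\begin{itemize}
\item Routing $\|\bfw\|_2$ through the Lipschitz constant of the network $\widetilde{\mathscr E}$ would require that constant to be strictly below $C_1C_\varphi$ on the support of $\x^+$. This is because the noise coefficient $C_1(1+C_1^2C_\varphi^2)$ is explicit and must hold for $C_1$ arbitrarily close to $1$. Standard ReLU approximation results give no such Lipschitz control, and clipping the output range does not supply it.
\item For the same reason, a McShane extension with constant $\sqrt n\,C_\varphi$ cannot be absorbed into $C_0$. That factor multiplies $\E\|\bfw\|_2^2$, not the $N^{-2/(n+2)}$ term.
\end{itemize}

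The paper never needs the network's Lipschitz constant. It inserts $\varphi$ at $\mathbf F(\x)$ and at $\x^+=\mathbf F(\x)+\bfw$ and uses only uniform approximation, giving $\|\widetilde{\mathscr E}(\mathbf F(\x))-\widetilde{\mathscr E}(\x^+)\|_2\le 2\sqrt n\,\varepsilon_1+C_\varphi\|\bfw\|_2$. What is needed is uniform approximation of $\varphi$ on the neighborhood of $\mathcal M$ that carries $\x^+$ (Assumption~\ref{iid_sampling}), plus the Lipschitz constant of $\varphi$ itself. If $\varphi$ must be extended off $\mathcal M$, use Kirszbraun, which keeps the constant equal to $C_\varphi$.

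Using $\mathbf A^\star\varphi(\x)=\varphi(\mathbf F(\x))$, group the residual as
\[
\mathbf A^\star\widetilde{\mathscr E}(\x)-\widetilde{\mathscr E}(\x^+)
=\underbrace{\mathbf A^\star\bigl(\widetilde{\mathscr E}(\x)-\varphi(\x)\bigr)+\bigl(\varphi(\x^+)-\widetilde{\mathscr E}(\x^+)\bigr)}_{\|\cdot\|_2\le(1+\rho)\sqrt{n}\,\varepsilon_1}
+\underbrace{\varphi(\mathbf F(\x))-\varphi(\x^+)}_{\|\cdot\|_2\le C_\varphi\|\bfw\|_2}.
\]
Your single Young step with weight $C_1^2$ on the last piece then gives exactly $C_1(1+C_1^2C_\varphi^2)\E\|\bfw\|_2^2$ for every $C_1>1$.

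The appendix instead applies Young with weight $C_1$ and then $(a+b)^2\le 2a^2+2b^2$, ending at $C_1(1+2C_1C_\varphi^2)$. That is dominated by the stated coefficient only when $C_1\ge 2$.
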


 A detailed proof is provided in Appendix \ref{appendixA}. \red{The upper bound contains the term $\mathbb{E}\|\mathbf{w}\|_2^2$, which represents the irreducible error induced by stochastic perturbations in the dynamics. For deterministic systems, where the evolution is noise-free, this term vanishes.}  Our analysis follows the general proof strategy developed for the single-chart autoencoder framework \cite{LIU2024DAE}, where the three components of the population loss \eqref{loss_pop} are bounded separately. Compared with \cite{LIU2024DAE}, our setting includes an additional latent linear evolution term $\mathbf{A}$, which introduces an extra intermediate factor of order $\mathcal{O}(N^{-\frac{2}{n+2}})$ in the estimation error. However, this additional term does not affect the overall convergence rate.  \red{Consequently, the squared generalization error consists of an estimation error term of order $\mathcal{O}\bigl(N^{-\frac{2}{n+2}}\log^4 N\bigr)$ together with an irreducible error term due to the stochastic perturbations in the dynamics.}

The following theorem establishes a generalization bound for the observation encoder $\mathscr E_{obs}$ defined above, \red{which aims to approximate the latent representation $\mathbf H\,\mathscr E(\mathbf{x})$ from observations $\mathbf{y}$. To ensure identifiability and stability of this mapping, we assume that the observation operator $\mathcal H$ satisfies a local stability condition.}

\begin{theorem}\label{thm2}
Suppose that Assumption~\ref{assum1} holds, and that the pre-trained components $\mathscr{E}$, $\mathscr{D}$, and $\mathbf{A}$ are fixed. Let $\widehat{\mathscr{E}}_{\mathrm{obs}}$ denote a global minimizer of the observation alignment objective~\eqref{loss2}. Assume that the observation encoder class has sufficient approximation capacity to represent \red{the regression function $\mathbf{y} \;\mapsto\; \mathbb{E}[\mathbf{H} \mathscr E(\mathbf{x}) \mid \mathbf{y}]$, under the data distribution induced by the observation process. Then the expected generalization error satisfies }
\begin{align}
\mathbb{E}_{\mathcal{S}} \mathbb{E}_{(\mathbf{x},\mathbf{y})} \left[ \bigl\| \widehat{\mathscr{E}}_{\mathrm{obs}}(\mathbf{y}) - \red{\mH}\mathscr{E}(\x) \bigr\|_2^{2}\right]  \le  \red{C_0^{obs}} D_y^{2}\log^{2} D_y N^{-\frac{2}{n+2}}\log^{4} N \red{+ C_1^{obs} \mathbb{E}\|\boldsymbol{\eta}\|_2^2},
\end{align}
where the constant \red{$C_0^{obs}>0$ depends on $n$, $R_{obs}$, $\Lambda$, and the volume of $\mathcal{M}$, and $C_1^{obs}>0$ depends on $\|\mH\|_2$, $C_\phi$ and the stability constant $C_{\mathcal H}$.}
\end{theorem}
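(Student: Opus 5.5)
The argument is a standard nonparametric regression bound: a bias--variance decomposition for the least-squares estimator $\widehat{\mathscr E}_{\mathrm{obs}}$, following the same template as Theorem~\ref{thm1} and the single-chart analysis of \cite{LIU2024DAE}. Write $g(\mathbf y):=\mathbb E[\mathbf H\mathscr E(\mathbf x)\mid\mathbf y]$ for the regression function and $g_0(\mathbf y_0):=\mathbf H\mathscr E(\mathcal H^{-1}(\mathbf y_0))$ for its noiseless counterpart, defined on the clean observation set $\mathcal H(\mathcal M)$. By the local stability assumption $\|\mathbf x_1-\mathbf x_2\|_2\le C_{\mathcal H}\|\mathcal H(\mathbf x_1)-\mathcal H(\mathbf x_2)\|_2$, $\mathcal H(\mathcal M)$ is the bi-Lipschitz image of an $n$-dimensional manifold, so $g_0$ is Lipschitz on an $n$-dimensional set. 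This intrinsic dimension $n$ is what produces the rate $N^{-2/(n+2)}$ instead of a $D_y$-dependent one.

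\textbf{Step 1: bias--variance split.} Since $g$ is the conditional mean,
\[
\mathbb E\|\widehat{\mathscr E}_{\mathrm{obs}}(\mathbf y)-\mathbf H\mathscr E(\mathbf x)\|_2^2
=\mathbb E\|\widehat{\mathscr E}_{\mathrm{obs}}(\mathbf y)-g(\mathbf y)\|_2^2+\mathbb E\|g(\mathbf y)-\mathbf H\mathscr E(\mathbf x)\|_2^2 .
\]
The second term is irreducible. To bound it, compare with $g_0(\mathbf y)$, since $g$ minimizes the conditional $L^2$ risk. On $\mathcal M$ we have $\mathbf x=\mathcal H^{-1}(\mathbf y-\boldsymbol\eta)$, and $\mathscr E$ can be taken close to $\varphi$ (Lipschitz with constant controlled by $C_\phi$ and $C_\varphi$). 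This gives
\[
\|g_0(\mathbf y)-\mathbf H\mathscr E(\mathbf x)\|_2\lesssim\|\mathbf H\|_2\,\mathrm{Lip}(\mathscr E)\,C_{\mathcal H}\|\boldsymbol\eta\|_2 ,
\]
so the irreducible term is at most $C_1^{obs}\mathbb E\|\boldsymbol\eta\|_2^2$. The constant depends only on $\|\mathbf H\|_2$, $C_\phi$ and $C_{\mathcal H}$, as claimed. The same argument shows that $g$ and $g_0$ differ by $O(\mathbb E\|\boldsymbol\eta\|^2)$ in $L^2$. This lets us treat the target function as a Lipschitz function of the clean observations plus a noise term absorbed into $C_1^{obs}$.

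\textbf{Step 2: estimation error of the ERM.} Apply the oracle inequality for least squares over a bounded network class $\mathcal F_{\mathrm{obs}}$, with outputs clipped at $R_{obs}$, as used for Theorem~\ref{thm1}:
\[
\mathbb E_{\mathcal S}\mathbb E\|\widehat{\mathscr E}_{\mathrm{obs}}-g\|^2\lesssim \inf_{f\in\mathcal F_{\mathrm{obs}}}\mathbb E\|f-g\|^2+\frac{R_{obs}^2\log\mathcal N(\delta,\mathcal F_{\mathrm{obs}},\|\cdot\|_\infty)}{N}+\delta .
\]
The approximation term is handled by the manifold-adapted ReLU approximation theory already invoked in the proof of Theorem~\ref{thm1}, now in ambient dimension $D_y$. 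For a Lipschitz function on an $n$-dimensional compact set, error $\epsilon$ is achieved with a network of size $\sim\epsilon^{-n}\log$ terms and depth $\sim\log(1/\epsilon)$, with constants carrying a $D_y\log D_y$ factor. The covering number is then $\log\mathcal N\lesssim \epsilon^{-n}\,D_y^2\log^2D_y\log^2(\cdot)$. Setting $\epsilon^2\asymp\epsilon^{-n}/N$, that is $\epsilon\asymp N^{-1/(n+2)}$, gives $C_0^{obs}D_y^2\log^2D_y\,N^{-2/(n+2)}\log^4N$. Summing with Step~1 gives the stated bound.

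\textbf{Main obstacle.} The delicate part is transferring the manifold approximation rate to the noisy setting. The observations $\mathbf y$ lie in a tubular neighborhood of $\mathcal H(\mathcal M)$, not on it, so we cannot directly approximate $g$ on an $n$-dimensional set. My plan is to approximate $g_0\circ\pi$ instead, where $\pi$ is a projection onto $\mathcal H(\mathcal M)$ (or simply $g_0$ extended in a Lipschitz way to $\mathbb R^{D_y}$ by Kirszbraun), and to charge the discrepancy $\mathbb E\|g_0\circ\pi(\mathbf y)-g(\mathbf y)\|^2$ to the $\mathbb E\|\boldsymbol\eta\|^2$ term. This step requires the assumption, in the statement, that the encoder class can represent the regression function, and it uses the stability constant $C_{\mathcal H}$ essentially.
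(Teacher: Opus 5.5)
Your argument is sound at the paper's level of rigor, but it is organized differently from the paper's proof.

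\textbf{The paper's route.} It reuses the Theorem~\ref{thm1} template verbatim. It splits the risk into $\chi_1^{obs}$ ($C_1$ times the empirical risk) and $\chi_2^{obs}$ (population minus $C_1$ times empirical). It bounds $\chi_1^{obs}$ by ERM optimality and an $\varepsilon$-approximant of $\mathbb E[\mathbf H\mathscr E(\mathbf x)\mid\mathbf y]$, supplied directly by the capacity assumption. It then applies the crude inequality $\|a-b\|_2^2\le 2\|a-c\|_2^2+2\|c-b\|_2^2$ and simply asserts that the conditional-variance term is controlled by $\mathbb E\|\boldsymbol\eta\|_2^2$ ``under a stability assumption'', with $C_1^{obs}=2C_1\|\mathbf H\|_2^2C_\phi^2C_{\mathcal H}^2$. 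Finally, $\chi_2^{obs}$ is handled by the same covering-number lemma, with the $\varepsilon^{-n}$ size of the class taken for granted.

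\textbf{Your route.} You use the exact Pythagorean identity for the conditional mean, then an excess-risk oracle inequality for least squares with a bounded, conditionally centered residual $\mathbf H\mathscr E(\mathbf x)-g(\mathbf y)$. This gives you three things the paper lacks.
\begin{itemize}
\item The irreducible term is isolated exactly rather than inflated by $2C_1$.
\item Its bound is actually derived, via minimality of the conditional mean plus comparison with the Lipschitz map $g_0$. This shows exactly where $C_{\mathcal H}$ enters.
\item You explain why an $n$-dimensional complexity suffices even though $\mathbf y$ is spread off $\mathcal H(\mathcal M)$.
\end{itemize}
Your $g_0\circ\pi$ step uses $\|\mathcal H(\mathbf x)-\pi(\mathbf y)\|_2\le 2\|\boldsymbol\eta\|_2$ and Jensen. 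It therefore replaces the capacity assumption on $g$ by a concrete intrinsic-dimension approximation statement, rather than ``requiring'' it as you say. The paper's route buys only brevity.

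\textbf{Points to tighten.}
\begin{itemize}
\item \emph{The constant.} Your derivation gives $\|\mathbf H\|_2\,\mathrm{Lip}(\mathscr E)\,C_{\mathcal H}$. Closeness of $\mathscr E$ to $\varphi$ would produce $C_\varphi$ (plus an additive $O(\varepsilon_1)$), not $C_\phi$. In any case, Theorem~\ref{thm1} does not make the pre-trained encoder uniformly close to $\varphi$: it is determined only up to reparametrization of the latent coordinates. So a Lipschitz bound on the fixed $\mathscr E$ is an extra assumption. The paper's proof tacitly needs the same thing.
\item \emph{The Kirszbraun alternative.} By itself it does not give the $n$-dimensional rate, because approximating the extension $\bar g_0$ on the noisy support of $\mathbf y$ is a full-dimensional problem. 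It only works if you approximate on $\mathcal H(\mathcal M)$ with networks of controlled Lipschitz constant and transfer to $\mathbf y$ at cost $\mathrm{Lip}\cdot\|\boldsymbol\eta\|_2$.
\item \emph{The projection route.} It needs $\mathcal H(\mathcal M)$ to have positive reach, e.g.\ $\mathcal H$ a $C^2$ embedding, which is more than bi-Lipschitz stability. Moreover, Gaussian noise leaves every tube with positive probability, so you need a truncation step. On $\{\|\boldsymbol\eta\|_2>q\}$, bound the error by its supremum times $\mathbb E\|\boldsymbol\eta\|_2^2/q^2$. This again lands in the noise term, with constants now also depending on $R_{obs}$ and $q$.
\end{itemize}
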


\red{A detailed proof is provided in Appendix~\ref{appenB}. The error bound consists of an estimation term of order $\mathcal{O}\bigl(N^{-\frac{2}{n+2}}\log^4 N\bigr)$ and an irreducible term induced by the observation noise $\boldsymbol{\eta}$. The rate is governed by the intrinsic dimension $n$ and matches that in Theorem~\ref{thm1}.}

\section{Numerical Results}\label{s5}
In this section, we present numerical experiments to assess the performance of the proposed LAE-EnKF in nonlinear data assimilation problems. The experiments are designed to evaluate three key aspects of the method: assimilation accuracy, robustness over time, and the benefit of enforcing linear dynamics in a learned latent space. All of the following numerical examples are carried out on a computing server with a NVIDIA GeForce RTX 3090 GPU.
 To enrich the information content of partial observations, we adopt a time-delay embedding strategy~\cite{Takens1981timedelay}. Specifically, a sequence of $L$ consecutive observations is concatenated to form an augmented observation vector,
\[
\mathbf{y}_k^{(L)} = \big[ \mathbf{y}_{k-L+1}^\top, \ldots, \mathbf{y}_{k-1}^\top, \mathbf{y}_k^\top \big]^\top,
\]
which is then processed by the observation encoder $\mathscr{E}_{\mathrm{obs}}$. \blue{This construction allows the latent observation embedding to exploit temporal correlations and improves identifiability by incorporating short-term dynamics.}

To quantify assimilation performance, we report several standard error metrics. The relative state error at time step $k$ is defined as
\begin{equation}\label{nrmse}
e_{\mathrm{Rel},k}
= \frac{\|\widehat{\mathbf{x}}_k - \mathbf{x}_k\|_2}{\|\mathbf{x}_k\|_2},
\end{equation}
where $\widehat{\mathbf{x}}_k$ denotes the estimated state and $\mathbf{x}_k$ is the reference solution. To measure overall accuracy over a finite time window, we further define the time-averaged root-mean-square error and the corresponding global relative error,
\begin{equation}\label{nrmse_total}
\red{\mathrm{E}_{1:K}
= \sqrt{\frac{1}{KD}\sum_{k=1}^{K}\sum_{i=1}^{D}
\big(\widehat{x}_{k,i}-x_{k,i}\big)^2},
\quad
\mathrm{E}_{\mathrm{Rel},1:K}
= \frac{\mathrm{E}_{1:K}}{
\sqrt{\frac{1}{K}\sum_{k=1}^{K}
\left(\frac{1}{D}\sum_{i=1}^{D}x_{k,i}^2\right)} }.}
\end{equation}
Here, $x_{k,i}$ and $\widehat{x}_{k,i}$ denote the $i$th components of the true and estimated state vectors at time step $k$, respectively.



To highlight the role of the latent linear structure and latent-space filtering, we compare the proposed LAE-EnKF with the following baseline methods.
\red{
\begin{itemize}
\item[(i)] \textbf{EnKF}: the standard ensemble Kalman filter, where both forecast and analysis steps are carried out directly in the physical state space (see Section~\ref{s2.2}).
\item[(ii)] \textbf{AE-EnKF}: an autoencoder-assisted EnKF where the encoder-decoder pair  is used to reconstruct the state during the forecast step, while the EnKF update is performed in the physical space (see Algorithm~\ref{alg:AE-EnKF}).
\item[(iii)] \textbf{DAE-EnKF}: a latent-space EnKF method in which the latent dynamics are modeled by a general nonlinear mapping, and both forecast and analysis steps are carried out in the latent space (see Algorithm~\ref{alg:DAE-EnKF}). This setting is similar in spirit to~\cite{LI2024LAINR}. To ensure a fair comparison, the encoder and decoder architectures are identical to those used in the proposed LAE framework.
\end{itemize}}

All methods use the same ensemble size, data normalization, and training configuration to ensure comparability. In particular, all state variables and observations are normalized to zero mean and unit variance prior to training and assimilation. \blue{ Unless otherwise specified, we use the identity latent observation operator $\mathbf{H}=\mathbf{I}$ throughout}. In all experiments, the encoder and decoder are implemented using convolutional neural networks (CNN), which are well suited for high-dimensional systems with spatial structure. The state encoder consists of three convolutional layers. Each layer uses a $4\times4$ kernel with stride $2$, progressively reducing the spatial resolution while extracting multiscale features. A fully connected layer with \emph{LeakyReLU} activations (negative slope  $\alpha = 0.1$) then maps the resulting features to the latent variables. The decoder mirrors this architecture and reconstructs the physical state using transposed convolutional layers. The latent linear transition operator $\mathbf{A}$ is learned as a single fully connected neural network without any activation function. This architectural choice explicitly enforces linear latent dynamics and is essential for maintaining structural compatibility with Kalman filtering. The observation encoder is implemented as a lightweight network composed of two one-dimensional convolutional layers with kernel size $3$, followed by a fully connected layer that projects the observations into the latent space. This design allows observations with different spatial layouts to be embedded consistently with the latent state representation. The loss weights in~\eqref{loss1} are selected so that the reconstruction, prediction, latent consistency, and regularization terms contribute at comparable magnitudes during training. All networks are implemented in PyTorch and trained using the Adam optimizer with an initial learning rate of $10^{-3}$. Early stopping based on the validation RMSE is applied to mitigate overfitting.

\subsection{Toy example}
We first consider a 100-dimensional nonlinear dynamical system that provides a controlled setting for examining the role of latent linear structure in data assimilation. Although the observed state is high dimensional, the underlying dynamics are intrinsically low dimensional, making this example well suited for assessing representation quality, latent dynamics, and filtering performance.

The system is governed by a single angular variable $\theta_{\red{k}}$, while the physical state is defined as
\begin{equation}
\mathbf{x}_{\red{k}}
= W
\begin{bmatrix}
\cos(\theta_{\red{k}}) \\
\sin(\theta_{\red{k}})
\end{bmatrix}
\in \mathbb{R}^{100},
\qquad
W \in \mathbb{R}^{100\times 2},
\quad
W_{ij} \sim \mathcal{N}(0,1),
\end{equation}
which embeds a two-dimensional rotational motion into a 100-dimensional ambient space. \red{Here, $k=0,1,\dots,K$ denotes discrete time steps.} The angular dynamics evolve according to
\begin{equation}
\theta_{\red{k}+1}
= \theta_{\red{k}} + \delta + \alpha \sin(2\theta_{\red{k}}) + c\,\epsilon_{\red{k}},
\end{equation}
where $\delta=\pi/50$ is a constant rotation increment, $\alpha=0.01$ introduces weak nonlinearity, $c=0.01$ scales the process noise, and $\epsilon_{\red{k}}\sim\mathcal{N}(0,1)$.

Observations are generated through a linear measurement model,
\begin{equation}
\mathbf{y}_{\red{k}}
= \purp{\mathcal{H}}(\mathbf{x}_{\red{k}}) + \boldsymbol{\eta}_{\red{k}},
\qquad
\boldsymbol{\eta}_{\red{k}} \sim \mathcal{N}(0,0.1^2),
\end{equation}
where \purp{$\mathcal{H}:\mathbb{R}^{100}\to\mathbb{R}^2$ is a linear observation operator that} randomly selects two state components. We simulate $N_{\mathrm{traj}}=500$ independent trajectories of length ${\red{K}}=100$, with initial angles $\theta_0$ sampled uniformly from $[-\pi,\pi]$. The data are split into $90\%$ for training and $10\%$ for testing. 

To isolate the effect of enforcing linear latent dynamics, we first examine the geometric structure of the latent representations learned by the proposed LAE and by the DAE-EnKF model, referred to here as DAE for brevity. Fig.~\ref{41.1} shows the latent variables for latent dimensions $n=2,3,4$. The DAE produces distorted closed curves, indicating that the learned latent dynamics do not preserve the underlying rotational structure. In contrast, the LAE generates significantly smoother latent trajectories, indicating that the imposed linear evolution promotes more regular and coherent temporal behavior in the latent space. When the latent dimension exceeds the intrinsic dimension ($n=3,4$), we project the latent variables onto their first two principal components for visualization. In all cases, the LAE yields nearly isotropic circles in the projected space, showing that redundant latent coordinates are effectively suppressed. Fig.~\ref{41.2} further displays pairwise latent coordinates $(z_i,z_j)$ for $n=3$. Each pair forms an ellipse, confirming that all latent variables evolve coherently within a shared two-dimensional linear subspace. Together, these results indicate that the LAE preserves the intrinsic manifold structure while enforcing global linear consistency in the latent dynamics.

\begin{figure}[htbp] 
	\centering  
		\begin{overpic}[width=0.7\linewidth, clip=true,tics=10]{ 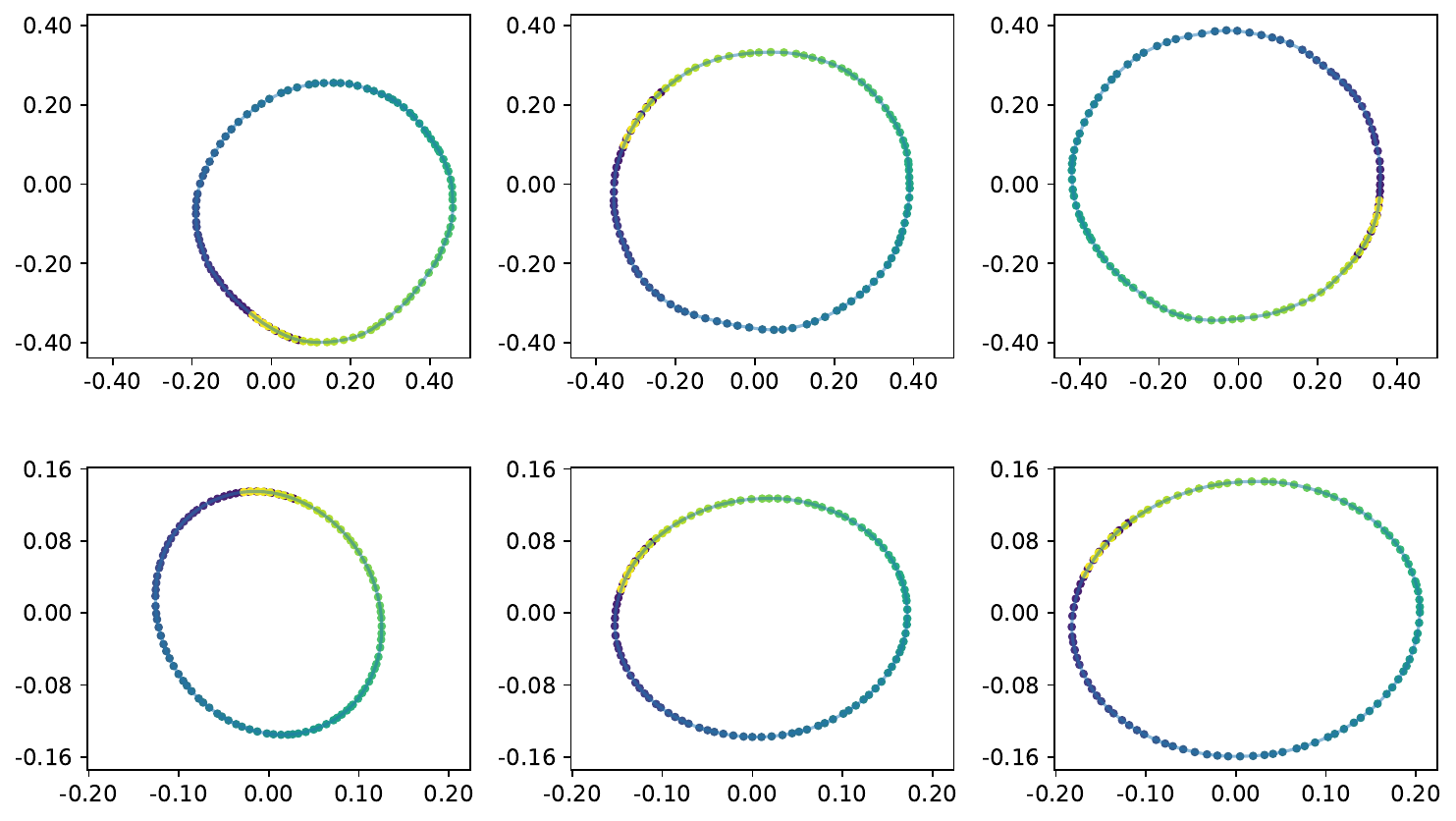}\put (15,56.5) {\footnotesize $n=2$ }\put (47,56.5) {\footnotesize $n=3$ }\put (81,56.5) {\footnotesize $n=4$ }\end{overpic}
	\caption{Learned latent representations for latent dimensions $n=2,3,4$. Top: latent variables learned by the autoencoder component of DAE-EnKF (denoted as DAE), where no structural constraint is imposed on the latent dynamics. Bottom: latent variables learned by the proposed LAE, which enforces linear evolution in the latent space. For $n>2$, the latent variables are projected onto their first two principal components using PCA for visualization.}  
	\label{41.1}
\end{figure}

\begin{figure}[htbp] 
	\centering  
		\begin{overpic}[width=0.9\linewidth, clip=true,tics=10]{ 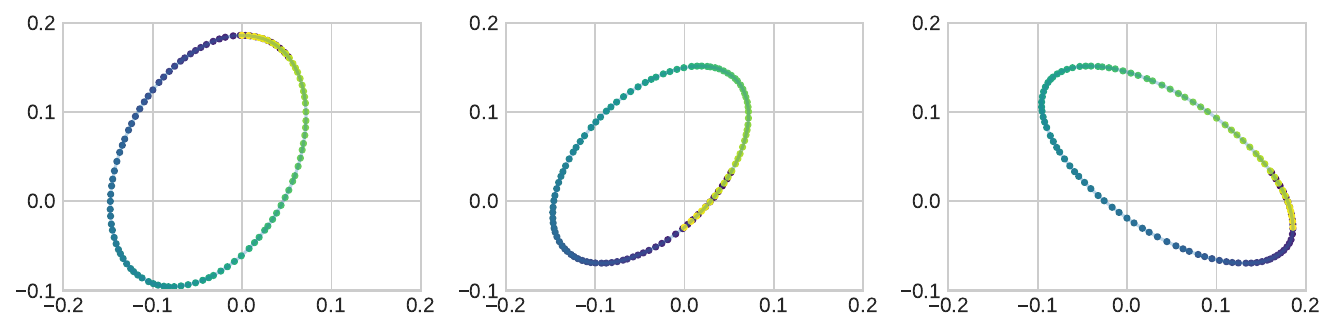}\put (14.5,24.8) {\footnotesize $(z_1,z_2)$ }\put (47,24.8) {\footnotesize $(z_1,z_3)$ }\put (80.5,24.8) {\footnotesize $(z_2,z_3)$ }\end{overpic}
	\caption{ Pairwise latent coordinate plots $(z_i, z_j)$ for $n=3$ learned by the LAE.}
	\label{41.2}
\end{figure}

We next study the impact of latent dimension on long-term prediction. Fig.~\ref{41.3} reports the relative prediction error obtained by propagating the learned latent dynamics forward in time and decoding back to the physical space.  For latent dimensions $n=2,3,4$, the prediction errors follow very similar trajectories over the entire forecast horizon, indicating that the long-term accuracy is only weakly affected by the choice of $n$. Although the underlying system is intrinsically two-dimensional, slightly higher-dimensional latent spaces do not degrade the prediction performance.  These observations highlight an important feature of the proposed method: once the latent dimension is sufficient to represent the essential dynamics, the enforced linear structure prevents spurious modes from destabilizing long-term predictions. As a result, the method is both dimension-efficient and robust, achieving accurate predictions without requiring careful tuning of the latent dimension.

\begin{figure}[tbp] 
	\centering  
		\begin{overpic}[width=0.45\linewidth, clip=true,tics=10]{ 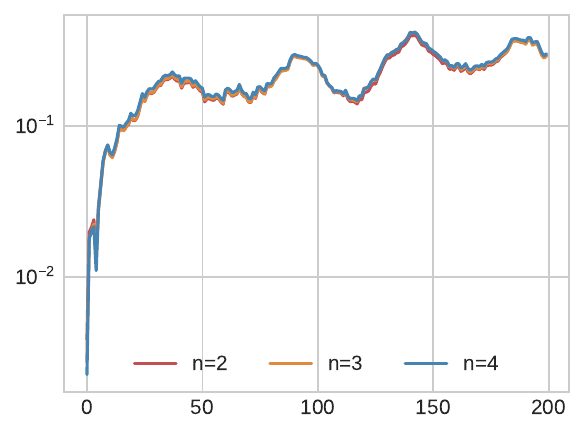}\end{overpic}  
    \caption{Long-term prediction relative error for latent dimensions $n=2,3,4$. Predictions are obtained by recursively propagating the learned latent dynamics over time steps and decoding back to the physical space.}
	\label{41.3}
\end{figure}

Finally, we evaluate data assimilation performance using LAE-EnKF and compare it with EnKF, AE-EnKF, and DAE-EnKF. \red{All methods exhibit comparable computational costs within the same order of magnitude, run-time comparisons are therefore not reported.} Tab. \ref{41.4} reports the global relative RMSE $\mathrm{E}_{\mathrm{Rel},1:\red{K}}$ for latent dimensions $n=2,3,4$. \red{Fig.~\ref{41.5} shows the evolution of the reconstruction error over discrete time steps} for $n=2$, together with $95\%$ confidence intervals over ten independent runs. \red{All methods outperform the standard EnKF. AE-EnKF attains low error in the initial stage, followed by oscillations and eventual error growth. This is due to its reliance on a one-step learned mapping, which lacks temporal consistency under repeated application, leading to error accumulation. In contrast, DAE-EnKF shows noticeable fluctuations, while LAE-EnKF produces a smoother and more stable error trajectory.}
\begin{table}[tbp]
	\caption{Relative error {$\mathrm{E}_{\text{Rel},1:\red{K}}$} for different methods with various values of  $n$.}
	\label{41.4}
	\centering
	\begin{tabular}{
			>{\centering\arraybackslash}m{1.3cm}
			>{\centering\arraybackslash}m{1.9cm}
			>{\centering\arraybackslash}m{1.9cm}
			>{\centering\arraybackslash}m{2.1cm}
			>{\centering\arraybackslash}m{2.1cm}
		}
		\toprule
		\toprule
		$\boldsymbol{n}$ & \textbf{EnKF} & \textbf{AE-EnKF} & \textbf{DAE-EnKF} & \textbf{LAE-EnKF} \\ \midrule
		-- &1.9757E-01 &--&--&--\\
		$n=2$ &-- & 1.6412E-01 & 2.3312E-01 & \textbf{3.8947E-02} \\
		$n=3$ & -- & 1.8667E-01 & 2.1106E-01 & \textbf{1.4244E-02} \\
		$n=4$ & -- & 1.9632E-01 & 4.0424E-01 & \textbf{1.5682E-02} \\
		\bottomrule
	\end{tabular}
\end{table}

\begin{figure}[htbp] 
	\centering  
		\begin{overpic}[width=0.45\linewidth, clip=true,tics=10]{ 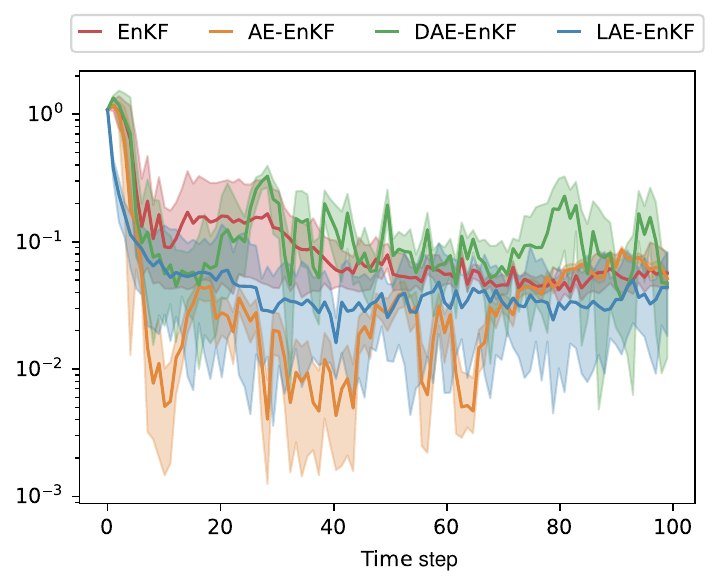}\end{overpic}
    \caption{\red{Evolution of the relative RMSE over discrete time steps for different methods with latent dimension $n=2$.}  Solid curves show the mean over 10 independent runs, and shaded regions indicate the corresponding 95\% confidence intervals.}
	\label{41.5}
\end{figure}

Fig.~\ref{41.6} compares reconstructed state trajectories in representative coordinate pairs $(x_i,x_j)$. The standard EnKF and DAE-EnKF deviate significantly from the true orbit. AE-EnKF better captures the overall geometry but still show intermittent departures. The LAE-EnKF reconstruction remains closely aligned with the true trajectory, yielding smoother and more coherent state evolution. These results demonstrate that enforcing linear latent dynamics improves both predictive stability and assimilation robustness, directly supporting the design motivation of the proposed LAE-EnKF.

\begin{figure}[htbp] 
\vspace{0.2cm}
	\centering  
		\begin{overpic}[width=0.96\linewidth, clip=true,tics=10]{ 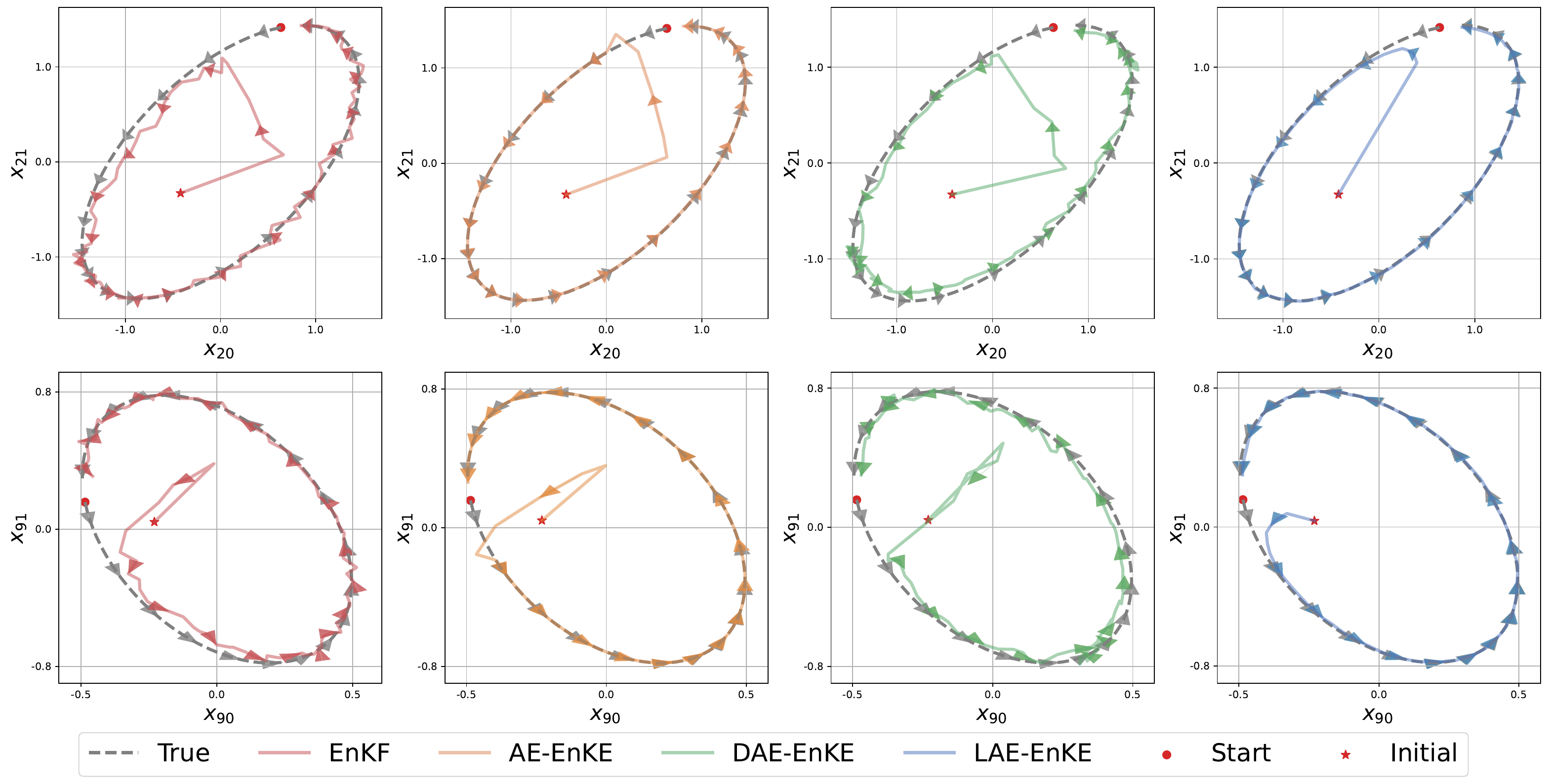}\put (11,51) {\footnotesize EnKF }\put (35,51) {\footnotesize AE-EnKF }\put (59,51) {\footnotesize DAE-EnKF }\put (84,51) {\footnotesize LAE-EnKF }\end{overpic}
	\caption{Reconstructed state trajectories in two representative coordinate pairs $(x_i, x_j)$ from the $D=100$ dimensional physical state for different methods with $n=2$, compared with the true trajectory. Red stars indicate the initial reconstructed state, and red dots denote the initial true state.}\label{41.6}
\end{figure}

\subsection{ Advection--diffusion--reaction equation}\label{ex2}
We consider a nonlinear advection diffusion reaction equation on the two-dimensional spatial domain $\purp{\boldsymbol{x}} =(x_1,x_2) \in [0,1]\times[0,1]$,
\begin{align}
\partial_t u(\purp{\boldsymbol{x}},t) + \mathbf{v}(\purp{\boldsymbol{x}})\cdot\nabla u(\purp{\boldsymbol{x}},t) &= \mu \Delta u(\purp{\boldsymbol{x}},t) - \alpha\bigl(\sin u(\purp{\boldsymbol{x}},t)-u(\purp{\boldsymbol{x}},t)\bigr),\\
	u(\purp{\boldsymbol{x}},0) &= u_0(\purp{\boldsymbol{x}}), \notag
\end{align}
subject to periodic boundary conditions. The diffusion coefficient and reaction parameter are set to \blue{$\mu=10^{-5}$} and $\alpha=0.8$, respectively. The advection velocity field is given by
\[
\mathbf{v}(\purp{\boldsymbol{x}}) = \bigl(\sin(2\pi x_2),\, \sin(2\pi x_1)\bigr).
\]

The initial condition is modeled as a zero-mean periodic Gaussian random field,
\begin{equation}
	u(\purp{\boldsymbol{x}},0) \sim \mathcal{GP}\bigl(0,K_\ell(\purp{\boldsymbol{x}},\purp{\boldsymbol{x}}')\bigr),
\end{equation}
with covariance kernel
\[
K_\ell(\purp{\boldsymbol{x}},\purp{\boldsymbol{x}}') =  \exp\!\left( - \frac{\sin^2(\pi(x_1-x_1'))+\sin^2(\pi(x_2-x_2'))}{2\ell^2} \right).
\]

Data assimilation is performed using sparse and noisy observations of the true solution. Measurements are collected at $5\times5$ uniformly distributed sensor locations, resulting in an observation vector $\mathbf{y}_k\in\mathbb{R}^{25}$, \blue{corrupted by additive Gaussian noise with standard deviation $\sigma=0.01$. Observations are available every $\Delta t=0.05$ up to the final time $T=3$. Fig.~\ref{44.1} shows the true solution at the initial and final times, with sensor locations marked by white circles.} The initial ensemble is generated by sampling spatial white noise followed by convolution with a two-dimensional Gaussian filter with standard deviation $3$. This procedure introduces spatial correlation and provides a common prior ensemble for all filtering methods.

\begin{figure}[htbp] 
	\centering  
		\begin{overpic}[width=0.44\linewidth, clip=true,tics=10]{ 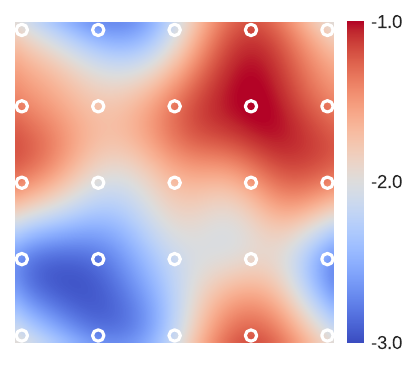}\end{overpic}
		\begin{overpic}[width=0.44\linewidth, clip=true,tics=10]{ 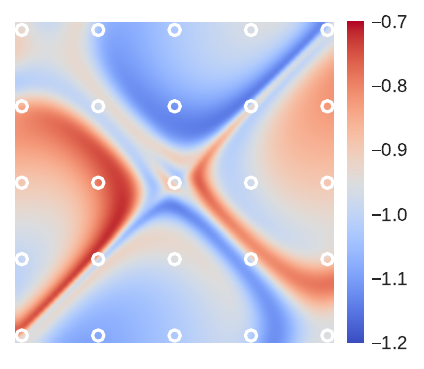}\end{overpic}
	\caption{\blue{True solution fields of the advection--diffusion--reaction equation at the initial time (Left) and final time $T=3$ (Right). White circles indicate the observation locations.}}
	\label{44.1}
\end{figure}

\blue{To better understand the role of different components in the proposed framework, we first conduct an ablation study on the loss function~\eqref{loss1}. Here, we fix the latent dimension to $n=20$ and remove either the latent consistency term or the prediction term in~\eqref{loss1}. Fig.~\ref{44.10} shows the multi-step state prediction relative error $\frac{\|\mathscr{D}\mathbf{A}^k\mathscr{E}(\x_0)-\x_{k}\|_2}{\|\x_{k} \|_2}$ and the latent consistency relative error $\frac{\|\mathbf{A}^k\mathscr{E}(\x_0)-\mathscr{E}(\x_{k})\|_2}{\|\mathscr{E}(\x_{k}) \|_2}$. The results  indicate removing the latent consistency term significantly degrades the linear structure in the latent space, while removing the prediction term leads to reduced accuracy in the physical state. This confirms that the two terms play complementary roles in learning accurate and structured latent dynamics.}

\begin{figure}[htbp] 
	\centering  
		\begin{overpic}[width=0.8\linewidth, clip=true,tics=10]{ 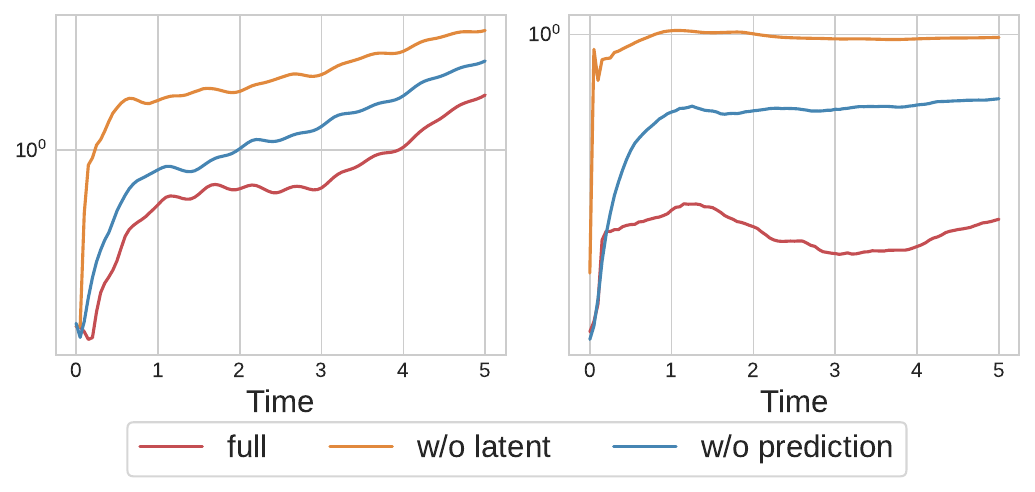}\end{overpic}
	\caption{\blue{Ablation study of the loss function \eqref{loss1} with latent dimension $n=20$. Left: relative state prediction error $\frac{\|\mathscr{D}\mathbf{A}^k\mathscr{E}(\x_0)-\x_{k}\|_2}{\|\x_{k} \|_2}$. Right: relative latent consistency error $\frac{\|\mathbf{A}^k\mathscr{E}(\x_0)-\mathscr{E}(\x_{k})\|_2}{\|\mathscr{E}(\x_{k}) \|_2}$. }}
	\label{44.10}
\end{figure}

To study the effect of the latent dimension, we train LAE models with latent dimensions $n\in\{16,20,24\}$ and apply the corresponding LAE-EnKF.  Each experiment is repeated over ten independent runs. Fig.~\ref{44.2} reports the time evolution of the relative RMSE, where solid lines denote the mean and shaded regions indicate $95\%$ confidence intervals. For all latent dimensions considered, LAE-EnKF achieves the lowest error and exhibits the most stable temporal behavior.


\begin{figure}[htbp] 
	\centering  
        \begin{overpic}[width=1\linewidth, clip=true,tics=10]{ 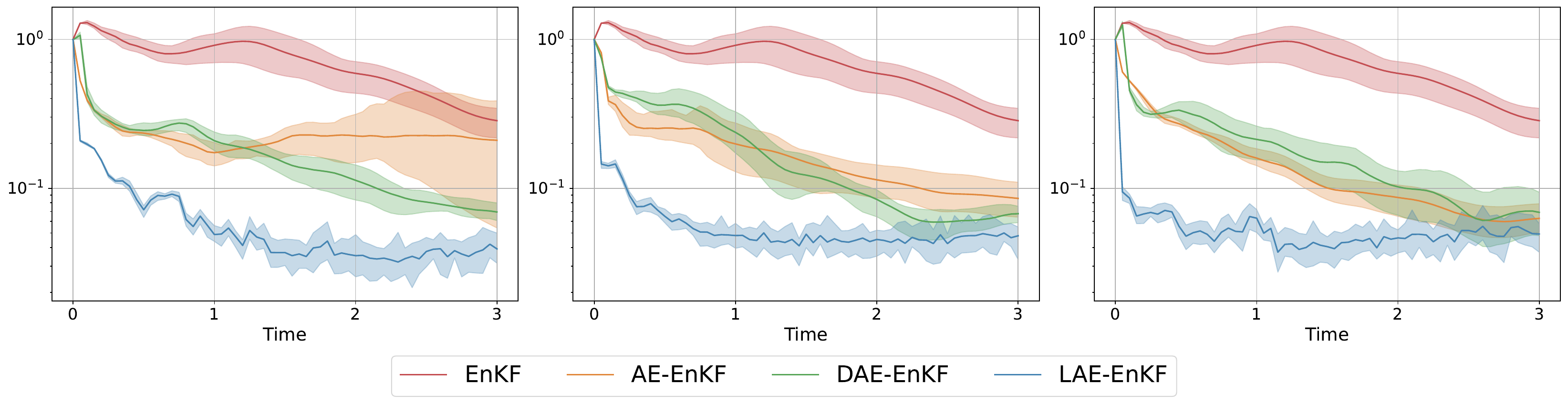}\put (15,25.95) {\footnotesize $n=16$ }\put (48,25.95) {\footnotesize $n=20$ }\put (81,25.95) {\footnotesize $n=24$ }\end{overpic}
	\caption{\blue{Time evolution of the relative RMSE for Example~\ref{ex2} under latent dimensions $n = 16,20, 24$. Solid curves indicate the mean over 10 independent runs, and the shaded regions represent the corresponding 95\% confidence intervals.}}
	\label{44.2}
\end{figure}

To further assess the quality of the reconstructed states, we examine the spatial structure of the assimilated solutions. Fig.~\ref{44.3} shows the reconstructed solution fields produced by LAE-EnKF at the final time $T$, together with the corresponding pointwise absolute errors, for different latent dimensions. Fig.~\ref{44.4} compares the reconstruction results of all methods for a representative case with $n=20$. In all cases, LAE-EnKF more accurately recovers the dominant spatial patterns and provides smoother reconstructions in regions without direct observations.

\begin{figure}[htbp] 
	\centering  
          \begin{overpic}[width=0.9\linewidth, clip=true,tics=10]{ 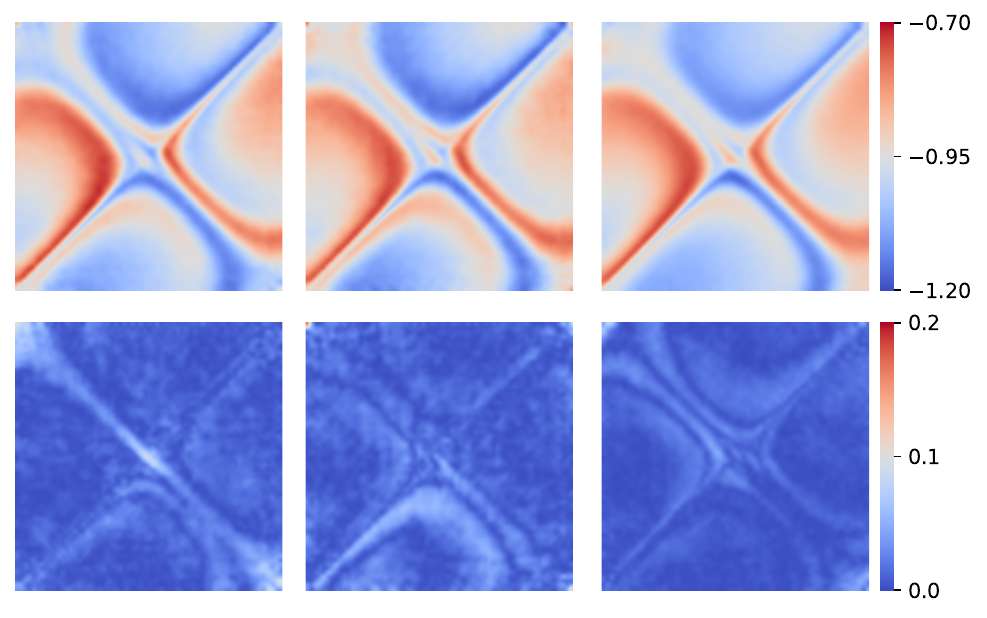}\put (13,61) {\footnotesize $n=16$ }\put (42,61) {\footnotesize $n=20$ }\put (72,61) {\footnotesize $n=24$ }\end{overpic}
    \caption{\blue{Reconstructed fields \red{(Top)} and corresponding pointwise absolute errors \red{(Bottom)} at the final time \red{$T=3$} by LAE-EnKF for different latent dimensions $n=16$, $n=20$, and $n=24$.}}
	\label{44.3}
\end{figure}

\begin{figure}[htbp] 
    \centering  
    \begin{overpic}[width=0.9\linewidth, clip=true,tics=10]{ 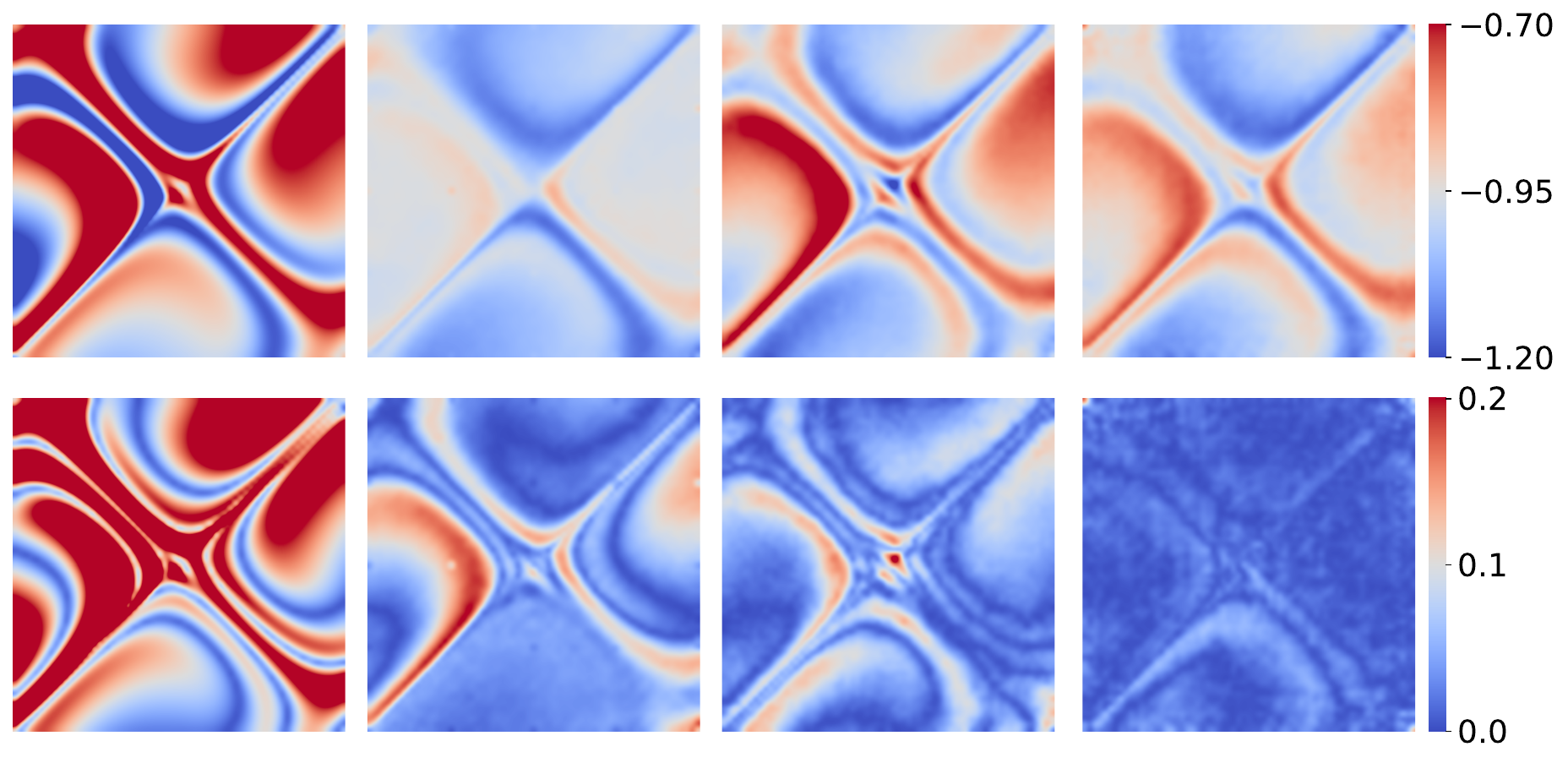}\put (9,48) {\footnotesize EnKF }\put (29,48) {\footnotesize AE-EnKF }\put (50,48) {\footnotesize DAE-EnKF }\put (74,48) {\footnotesize LAE-EnKF }\end{overpic}
    \caption{\blue{Comparison of reconstructed fields \red{(Top)} and corresponding pointwise absolute errors \red{(Bottom)} at the final time \red{$T=3$} for latent dimension $n=20$.}}
	\label{44.4}
\end{figure}

\blue{We next examine the effect of different latent observation operators. In addition to the choice $\mathbf{H}=\mathbf{I}$, we also consider both randomly generated observation operators and learned observation mappings in the latent space. For a fair comparison, the latent dynamical model obtained in the first training stage is kept fixed across all settings. Fig.~\ref{44.9} shows the time evolution of the relative RMSE under these different observation models, with the global error $\mathrm{E}_{\text{Rel},1:K}$ reported in the title. The results demonstrate that comparable performance is achieved across all observation operators, indicating that the proposed method is not sensitive to the specific choice of the observation model.}


\begin{figure}[htbp] 
\vspace{0.35cm}
	\centering  
        \begin{overpic}[width=1\linewidth, clip=true,tics=10]{ 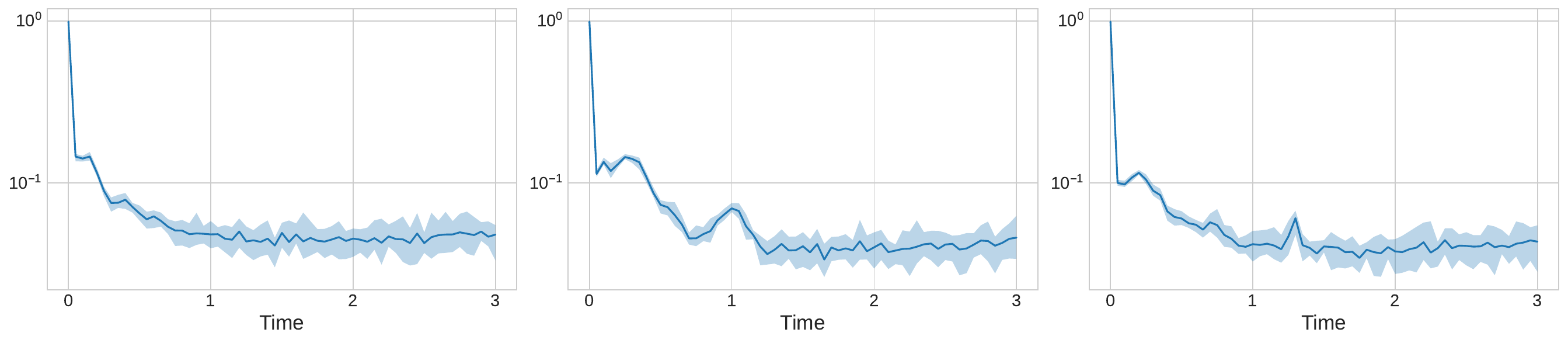}\blue{\put (15,24) {\footnotesize identity $\mathbf{H}$}\put (8,22) {\footnotesize ($\mathrm{E}_{\text{Rel},1:K}=0.0633$) }\put (45,24) {\footnotesize random $\mathbf{H}$}\put (41,22) {\footnotesize  ($\mathrm{E}_{\text{Rel},1:K}=0.0715$) }\put (80,24) {\footnotesize learned $\mathbf{H}$ }\put (75,22) {\footnotesize ($\mathrm{E}_{\text{Rel},1:K}=0.0556$)}}\end{overpic}
	\caption{\blue{Time evolution of the relative RMSE for Example \ref{ex2} under different latent observation operators: identity operator (Left), randomly operator (Middle), and learned operator (Right). Solid curves denote the mean over 10 independent runs, and shaded regions indicate the corresponding 95\% confidence intervals. The global relative error $\mathrm{E}_{\text{Rel},1:K}$  is reported in each subplot.
    }}
	\label{44.9}
\end{figure}

Finally, Tab.~\ref{44.5} summarizes the global relative RMSE $\mathrm{E}_{\mathrm{Rel},1:\red{K}}$ under different observation noise levels, together with the online computation time of the EnKF assimilation step. The results show that LAE-EnKF consistently achieves the best accuracy--efficiency trade-off. Compared with the standard EnKF, the proposed method substantially reduces computational cost while improving reconstruction accuracy. These results confirm that learning a stable linear latent representation and performing EnKF in the latent space can significantly enhance data assimilation performance for nonlinear PDE systems.

\begin{table}[tbp]
	\caption{Relative error {{$\mathrm{E}_{\text{Rel},1:\red{K}}$ }} and online computation time for Example 2.}
	\label{44.5}
	\centering
	\begin{tabular}{
			>{\centering\arraybackslash}m{1.5cm} 
			>{\centering\arraybackslash}m{1.9cm} 
			>{\centering\arraybackslash}m{2.1cm}
			>{\centering\arraybackslash}m{2.1cm}
			>{\centering\arraybackslash}m{2.1cm}
		}
		\toprule
		\toprule
		{$\boldsymbol{\sigma}$} &  \textbf{EnKF}   &  \textbf{AE-EnKF} & \textbf{DAE-EnKF}&  \textbf{LAE-EnKF}   \\ \midrule
		$\sigma=0.10$ & \blue{0.7987}   &  \blue{0.3371} & \blue{0.4369}  & \blue{\textbf{0.1058} }\\
		$\sigma=0.05$ & \blue{0.7553}   & \blue{0.2855} & \blue{0.3400}  &  \blue{\textbf{0.0780}} \\
		$\sigma=0.01$ & \blue{0.7516}   &  \blue{0.2344} & \blue{0.2860}  &  \blue{\textbf{0.0633}} \\
		\midrule
		\textbf{Time (s)} & \blue{2837.29}  & \blue{ 6.75} & \blue{0.10} &  \blue{\textbf{0.05}}\\
		\bottomrule 
	\end{tabular}
\end{table}

\subsection{Lorenz--96 model}

We consider the chaotic Lorenz--96 system,
\begin{equation}
	\frac{d x_i}{dt}
	= (x_{i+1}-x_{i-2})\,x_{i-1} - x_i + F,
	\qquad i=0,\ldots,D-1,
\end{equation}
with periodic boundary conditions
$x_D\equiv x_0$, $x_{-1}\equiv x_{D-1}$, and $x_{-2}\equiv x_{D-2}$.
In all experiments, we set $D=40$ and the forcing parameter $F=8$, which is a standard configuration that produces sustained chaotic dynamics. The system is integrated using a fourth-order Runge--Kutta scheme with time step $0.01$.

Observations are generated according to
\[
\mathbf{y}_k = \mathcal{H}\bigl(\mathbf{x}(t_k)\bigr) + \boldsymbol{\eta}_k,
\qquad
\boldsymbol{\eta}_k \sim \mathcal{N}(\mathbf{0}, \sigma^2 \mathbf{I}),
\]
\blue{where $\mathcal{H}$ denotes the observation operator. We consider both linear and nonlinear observation settings. In the linear observation setting, the observation operator $\mathcal{H}$ is chosen as a subsampling operator that extracts every other state variable,}
\[
\mathcal{H}(\mathbf{x}) = (x_0, x_2, \ldots, x_{38})^\top,
\]
\blue{with observation noise level $\sigma = 1$.} This setup represents a partially observed scenario in which only half of the state variables are directly measured.

\paragraph{Dense observation regime}
We first consider a relatively dense observation setting with observations available every $\Delta t = 0.1$. LAE models are trained with latent dimensions $n \in \{64,80,100\}$, and the resulting LAE-EnKF is compared with AE-EnKF, DAE-EnKF, and an EnKF baseline. \red{Tab.~\ref{43.1} reports the global relative RMSE $\mathrm{E}_{\mathrm{Rel},1:K}$ over the full state for all methods, both with and without localization.} \red{Localization significantly improves the performance of EnKF and AE-EnKF, consistent with the well-known limitation of EnKF for the Lorenz--96 system. Without localization, sampling errors in the empirical covariance introduce spurious long-range correlations, leading to instability and large reconstruction errors. Since EnKF and AE-EnKF operate in the original state space, localization is essential for stabilizing their performance. Accordingly, unless otherwise specified, all references to EnKF and AE-EnKF hereafter correspond to their localized implementations. In contrast, DAE-EnKF and LAE-EnKF operate in a latent space, where spurious correlations are inherently reduced. Localization provides negligible improvement for these methods. In particular, LAE-EnKF achieves low errors, with nearly identical performance with and without localization, indicating that localization is unnecessary in this setting. This indicates that latent representations mitigate sampling errors through structured dynamics.}

\begin{table}[tbp]
\caption{\red{Relative RMSE $\mathrm{E}_{\mathrm{Rel},1:K}$ for different methods with varying latent dimension $n$, reported without (with) covariance localization, under two observation intervals.}}
\label{43.1}
\cred
\centering
\begin{tabular}{c c c c c c}
\toprule
$\Delta t$&
$\boldsymbol{n}$ 
& \textbf{EnKF }
& \textbf{AE-EnKF} 
& \textbf{DAE-EnKF} 
& \textbf{LAE-EnKF}  \\
\midrule
\midrule	
 & --     & 0.5977 (0.1590) & --          & --          & --       \\
&$64$ & --      & 0.4504 (0.2027) & 0.8276 (0.8195) & \textbf{0.1475} (0.1480)  \\
$0.1$&$80$ & --  & 0.3793 (0.2026) & 0.8202 (0.8697) & \textbf{0.1509} (0.1512)  \\
&$100$& --  & 0.3563 (0.2080) & 0.6696 (0.6804) & \textbf{0.1426} (0.1431)  \\
\midrule	
& --     & 0.3223 (0.1689) & --          & --          & --       \\
&$64$ & -- & 0.4376 (0.2416) & 1.1994 (1.0393) & \textbf{0.1862} (0.1864)  \\
$0.2$&$80$ & -- & 0.3502 (0.2334) & 0.7244 (0.7274) & \textbf{0.1704} (0.1708)  \\
&$100$& -- & 0.2723 (0.2360) & 0.6567 (0.6452) & \textbf{0.1679} (0.1682)  \\
\bottomrule
\end{tabular}
\end{table}

Fig.~\ref{43.2} shows the reconstructed spatial fields and the corresponding errors for the representative case $n=64$, while Fig.~\ref{43.3} presents the temporal evolution of one observed and one unobserved state variable. DAE-EnKF exhibits larger reconstruction errors and irregular temporal behavior, indicating instability in the learned latent dynamics. In contrast, LAE-EnKF consistently achieves lower errors and smoother temporal evolution for both observed and unobserved variables. This demonstrates that enforcing stable linear dynamics in the latent space substantially improves the recovery of hidden states in a chaotic system.

\begin{figure}[tbp] 
	\centering  
         \begin{overpic}[width=0.9\linewidth, clip=true,tics=10]{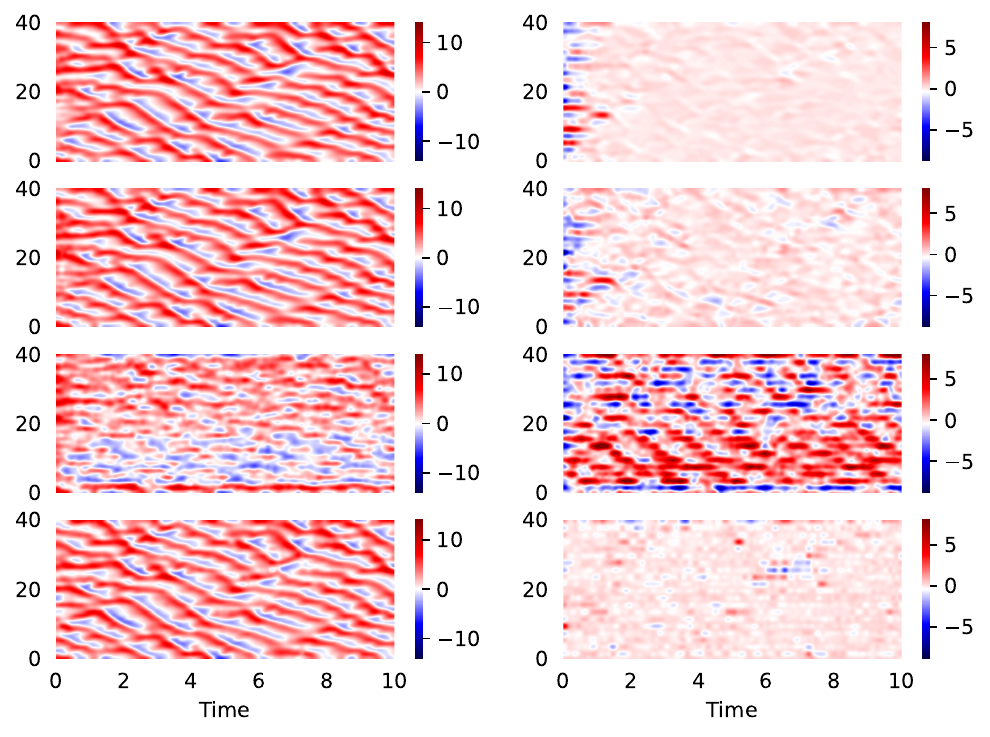}\end{overpic}
	\caption{Reconstructed spatial state fields (Left) and corresponding error (Right) with the observation interval $\Delta t = 0.1$ and latent dimension $n=64$. From top to bottom, the results show the localization EnKF, \red{localization AE-EnKF}, DAE-EnKF and the LAE-EnKF.	}
	\label{43.2}
\end{figure}

\begin{figure}[htbp] 
\centering  
    \begin{overpic}[width=0.9\linewidth, clip=true, tics=10]{ 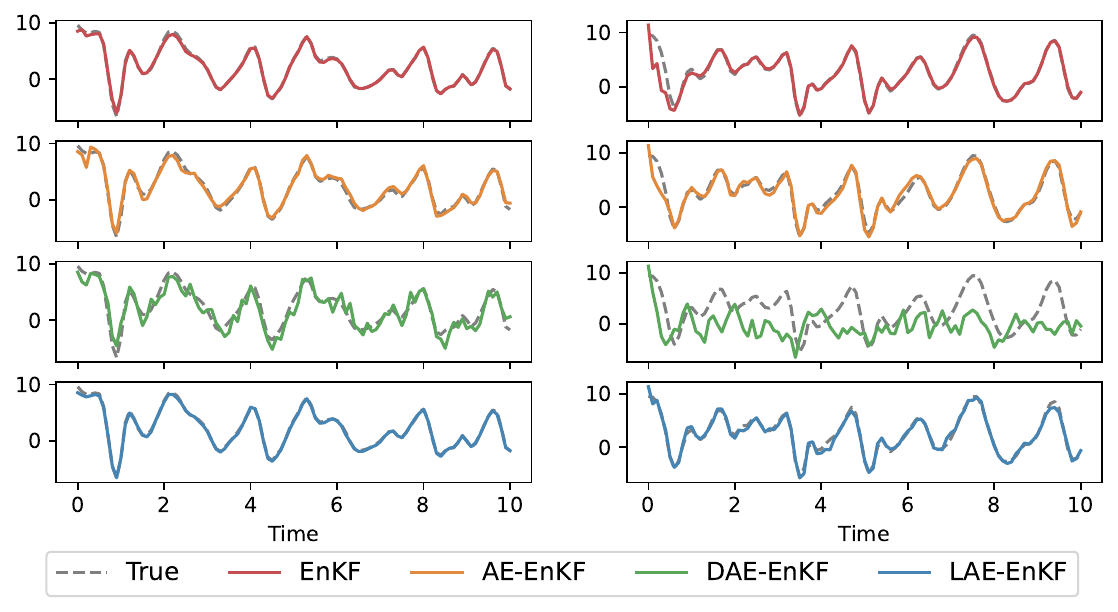}
	\end{overpic}
\caption{Time evolution of a representative observed variable (Left) and an unobserved variable (Right) reconstructed with observation interval $\Delta t = 0.1$ and latent dimension $n=64$, compared with the corresponding true trajectories. From top to bottom, the results show the localization EnKF, \red{localization AE-EnKF}, DAE-EnKF and the LAE-EnKF.	}
\label{43.3}
\end{figure}

\paragraph{Sparse observation regime}
We next consider a sparser observation setting with the observation interval increased to $\Delta t = 0.2$. The same latent dimensions $n\in\{64,80,100\}$ are used for all learned-model-based methods to ensure a fair comparison. The corresponding error metrics are again reported in Tab.~\ref{43.1}. Fig.~\ref{43.5} shows the reconstructed spatial fields and errors for $n=64$. Despite the reduced observation frequency, LAE-EnKF maintains reconstruction accuracy comparable to that obtained with $\Delta t=0.1$, demonstrating strong robustness to observation sparsity.

\begin{figure}[htbp] 
	\centering  
        \begin{overpic}[width=0.9\linewidth, clip=true, tics=10]{ 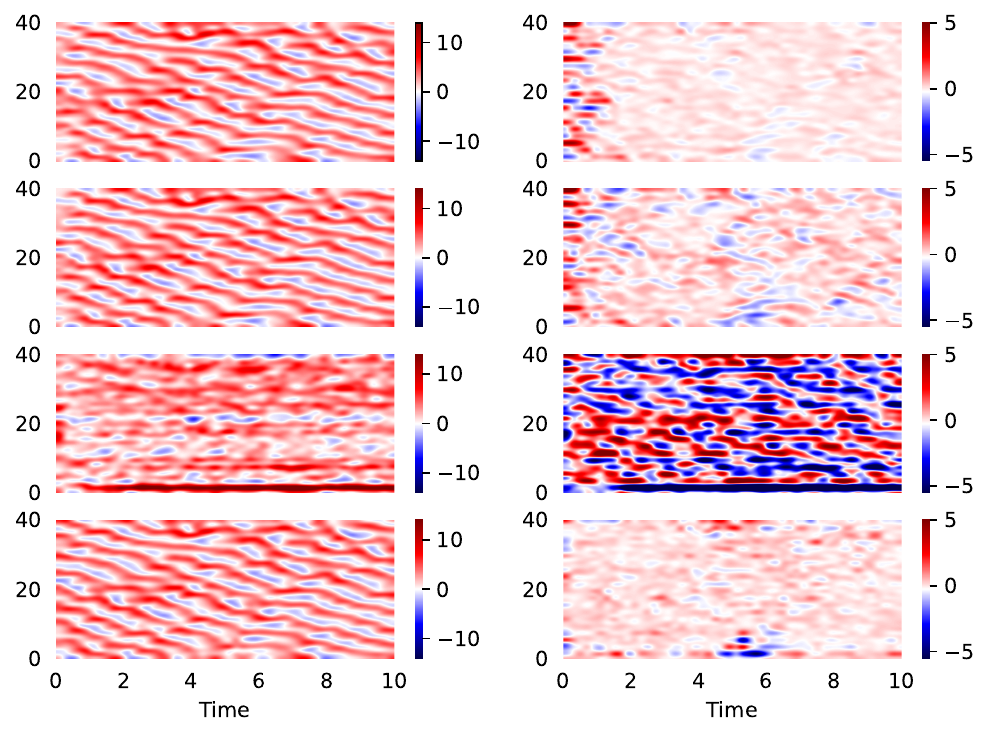}
		\end{overpic}
	\caption{Reconstructed spatial state fields (Left) and corresponding error (Right) with the observation interval $\Delta t = 0.2$ and latent dimension $n=64$. From top to bottom, the results show the localization EnKF, \red{localization AE-EnKF}, DAE-EnKF and the LAE-EnKF.}
	\label{43.5}
\end{figure}

Fig.~\ref{43.6} shows the temporal evolution of one observed and one unobserved variable reconstructed by LAE-EnKF. As the latent dimension increases, the reconstruction of the observed variable improves steadily, while the unobserved variable is accurately captured across all tested dimensions. This behavior indicates that the learned linear latent dynamics provide reliable temporal extrapolation and support stable assimilation under infrequent observations.

\begin{figure}[tbp] 
	\centering  
		\begin{overpic}[width=0.9\linewidth, clip=true,tics=10]{ 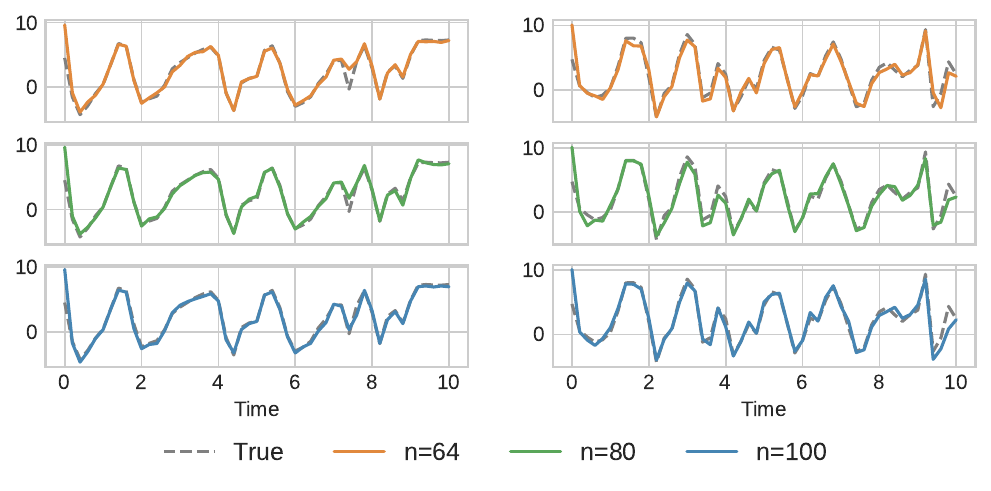}\end{overpic}
	\caption{Time evolution of a representative observed variable (Left) and an unobserved variable (Right) reconstructed by LAE-EnKF with observation interval $\Delta t = 0.2$ for different latent dimension $n=64,80,100$. }
	\label{43.6}
\end{figure}


\paragraph{\blue{Nonlinear observation setting}}
\blue{We further consider a nonlinear observation operator defined as
\[
\mathcal{H}(\mathbf{x}) = \arctan(\mathbf{x}).
\]
Under nonlinear observation settings, we compare the localized EnKF with the proposed LAE-EnKF. For LAE-EnKF, we investigate the impact of different latent observation operators $\mathbf{H}$, including the identity operator, random operators, and learned operators. Fig.~\ref{43.3} shows the reconstructed trajectory of a representative state component, with the corresponding global RMSE $\mathrm{E}_{\mathrm{Rel},1:K}$ annotated above each figure. The results show that the performance of the standard EnKF deteriorates under nonlinear observations, whereas LAE-EnKF remains stable and achieves more accurate state reconstruction, closely matching the true trajectory. Moreover, structured choices of $\mathbf{H}$, such as the identity or random operators, generally outperform learned operators in terms of reconstruction accuracy. This behavior may be attributed to the fact that structured operators are more likely to preserve the geometric and covariance structure of the latent space, while a fully learned $\mathbf{H}$ may introduce distortions in dominant dynamical directions, thereby affecting the effectiveness of the Kalman update.}

\begin{figure}[htbp] 
    \vspace{0.1cm}
	\centering  
		\begin{overpic}[width=0.96\linewidth, clip=true,tics=10]{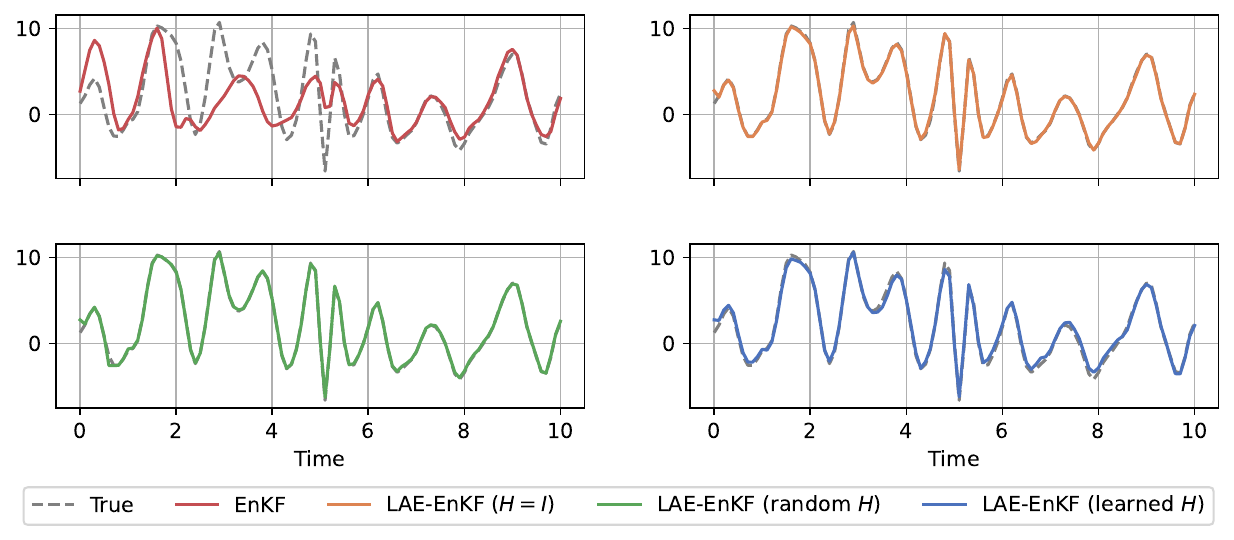}\blue{\put(17,44){\footnotesize $\mathrm{E}_{\text{Rel},1:K}=0.6868$}\put(67,44){\footnotesize $\mathrm{E}_{\text{Rel},1:K}=0.0374$} \put(17,25.5){\footnotesize $\mathrm{E}_{\text{Rel},1:K}=0.0306$} \put(67,25.5){\footnotesize $\mathrm{E}_{\text{Rel},1:K}=0.0990 $}}
        \end{overpic}

    \caption{\blue{Reconstruction of representative state components under nonlinear observations. Comparison between localized EnKF and LAE-EnKF under different latent observation operators, including the identity, random, and learned operators. The global RMSE $\mathrm{E}_{\text{Rel},1:K}$ is reported above each panel.}}
	\label{43.4}
\end{figure}

Overall, the Lorenz–96 experiments show that the proposed LAE-EnKF captures the essential dynamics of a chaotic system within a compact linear latent representation. By enabling stable and accurate data assimilation under partial and sparse observations, the LAE-EnKF extends the applicability of ensemble Kalman filtering to nonlinear chaotic systems in a fully data-driven setting.

\section{Conclusion}\label{s6}
This paper introduced the latent autoencoder ensemble Kalman filter (LAE-EnKF), a structure-preserving framework for data assimilation in nonlinear and partially observed systems. By learning a nonlinear encoder–decoder pair together with a stable linear latent dynamical model and a unified observation embedding, the proposed approach constructs a latent representation in which the assumptions underlying Kalman filtering are approximately satisfied. This design enables ensemble Kalman filtering to be performed directly in the learned latent space, mitigating the structural mismatch that often limits the performance of standard EnKF in strongly nonlinear regimes. Extensive experiments on representative nonlinear and chaotic systems showed that the LAE-EnKF consistently improves assimilation accuracy, robustness, and long-term stability compared with the classical EnKF and existing autoencoder-assisted methods, while maintaining favorable computational efficiency.

The proposed framework opens several promising directions for future research. These include extensions to systems with model error and parameter uncertainty, adaptive latent dimension selection, and integration with localization and inflation strategies for large-scale geophysical applications. More broadly, this work highlights the potential of structure-aware representation learning as a principled pathway for advancing ensemble-based data assimilation in complex nonlinear systems.

\appendix

\section{Proof of Theorem \ref{thm1}}\label{appendixA}
To prove \cref{thm1}, we follow the proof strategy developed in \cite{LIU2024DAE}, which establishes covering number bounds for deep ReLU networks when approximating functions supported on low-dimensional manifolds. Our proof adopts the same notation and technical framework, with adaptations tailored to the present setting.

We denote by $\mathcal{F}_{\mathrm{NN}}(d_1,d_2;L,p,K,\kappa,R)$ the class of neural networks introduced in \cite{LIU2024DAE}, where $d_1$ and $d_2$ are the input and output dimensions. The networks have depth $L$, width $p$, and at most $K$ nonzero parameters, each bounded by $\kappa$ in magnitude. The effective parameter count $K$ may depend on $(d_1,d_2)$. For inputs satisfying $\|\mathbf{x}\|_\infty \le B$, the network outputs are uniformly bounded by $R$. The proof is based on controlling both approximation and estimation errors through a covering number bound for the class $\mathcal{F}_{\mathrm{NN}}$ under the uniform norm. To make the argument precise, we first introduce the assumptions and auxiliary lemmas needed for the subsequent analysis. \red{For the subsequent analysis, we use the mixed $L^{\infty,\infty}$ norm defined by $ \|\mathbf{f}\|_{L^{\infty,\infty}} := \sup_{\mathbf{x} \in \Omega} \|\mathbf{f}(\mathbf{x})\|_\infty$. The latent variables are assumed to lie in a bounded domain, taken as $[-\Lambda,\Lambda]^n$ up to approximation of the manifold parametrization $\varphi$.}
 
\begin{definition}[Covering number {\cite[Definition 2.1.5]{Vaart1996cover}}]
For any $\delta>0$, the covering number of $\mathcal{F}$ is defined as
\[
\mathcal{Q}(\delta,\mathcal{F},\|\cdot\|) = \min\left\{  |\red{\mathcal{C}}|:\red{\mathcal{C}}\text{ is a }\delta\text{-cover of }\mathcal{F}\text{ under }\|\cdot\|\right\},
\]
\end{definition}

\begin{lemma}[{\cite[Lemma 5.3]{Chen2022cover}}]\label{lemma_nncover}
Assume that the input domain satisfies $\|\mathbf{x}\|_\infty\le B$. For any $\delta>0$,
\begin{align}\label{NNcover}
    \mathcal{Q}\bigl(\delta,\mathcal{F}_{\mathrm{NN}}(d_1,d_2;L,p,K,\kappa,R),\|\cdot\|_{L^{\infty,\infty}}\bigr) \le \left(\frac{2L^2(pB+2)\kappa^L p^{L+1}}{\delta}\right)^K .
\end{align}
\end{lemma}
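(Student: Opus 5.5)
The plan is the standard parameter-discretization argument: cover $\mathcal{F}_{\mathrm{NN}}(d_1,d_2;L,p,K,\kappa,R)$ by networks whose nonzero weights lie on a finite grid, and show that nearby parameters give uniformly nearby outputs. Write a network as
\[
f(\mathbf{x})=W_L\,\sigma\bigl(W_{L-1}\cdots\sigma(W_1\mathbf{x}+\mathbf{b}_1)\cdots+\mathbf{b}_{L-1}\bigr)+\mathbf{b}_L ,
\]
where $\sigma$ is the ReLU, each $W_\ell$ has at most $p$ rows and columns, every entry is bounded by $\kappa$ in magnitude, and at most $K$ entries are nonzero overall.

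\textbf{Step 1 (perturbation bound).} Take two networks $f$ and $f'$ with the same sparsity pattern whose parameters differ entrywise by at most $\varepsilon$. Let $\mathbf{h}_\ell$ and $\mathbf{h}'_\ell$ denote their $\ell$-th layer outputs, and bound $\|\mathbf{h}_\ell-\mathbf{h}'_\ell\|_\infty$ by induction over the layers.
\begin{itemize}
\item Since ReLU is $1$-Lipschitz and each row has at most $p$ entries bounded by $\kappa$, one has $\|\mathbf{h}_\ell\|_\infty\le (p\kappa)^{\ell}B+\sum_{j<\ell}(p\kappa)^{j}\kappa$. For $\kappa\ge1$ (the relevant regime) this is bounded by $(pB+2)(p\kappa)^{\ell-1}\kappa$ up to harmless factors.
\item The recursion for the difference reads
\[
\|\mathbf{h}_\ell-\mathbf{h}'_\ell\|_\infty\le p\kappa\,\|\mathbf{h}_{\ell-1}-\mathbf{h}'_{\ell-1}\|_\infty+\varepsilon\bigl(p\,\|\mathbf{h}_{\ell-1}\|_\infty+1\bigr).
\]
\end{itemize}
Unrolling this recursion yields
\[
\|f-f'\|_{L^{\infty,\infty}}\le \varepsilon\,L\,(pB+2)\,\kappa^{L-1}p^{L}.
\]
This holds uniformly over $\|\mathbf{x}\|_\infty\le B$, which is exactly the norm required in \eqref{NNcover}.

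\textbf{Step 2 (counting).} The cover is built in two stages.
\begin{itemize}
\item First choose the sparsity pattern. The total number of parameter slots is at most $L(p^2+p)$, so the number of patterns is at most $\binom{L(p^2+p)}{K}\le (L(p^2+p))^K$.
\item For a fixed pattern, place each of the $K$ active weights on an $\varepsilon$-grid of $[-\kappa,\kappa]$. This gives at most $(2\kappa/\varepsilon)^K$ points.
\end{itemize}
Set $\varepsilon=\delta/\bigl(L(pB+2)\kappa^{L-1}p^{L}\bigr)$. By Step 1, every network in the class lies within $\delta$ of a grid network with the same pattern. If one insists on a cover by elements of the class itself, the usual factor-$2$ argument handles this. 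Multiplying the two counts gives
\[
\mathcal{Q}\le\Bigl(\tfrac{2L^2(pB+2)\kappa^{L}p^{L+1}\cdot(\text{slot factor})}{\delta}\Bigr)^K .
\]
The slot factor must then be absorbed into the stated constant. This works by counting patterns per layer as $p^{K}$-type factors, or by merging the $p$ from $p^2+p$ with one power already present, so that the final exponents match $2L^2(pB+2)\kappa^Lp^{L+1}$. The output bound $R$ plays no role, since clipping is $1$-Lipschitz and cannot increase distances.

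\textbf{Main obstacle.} The main difficulty is the bookkeeping in Step 1 and in the pattern count, so that the exact powers $L^2$, $\kappa^L$ and $p^{L+1}$ come out as stated. To do this, I would first follow the layerwise bound of Chen et al. verbatim, treating biases as an extra column so that each layer contributes a factor $(p\kappa)$ and an additive $\varepsilon(p\|\mathbf{h}\|_\infty+1)$. I would then verify that the residual combinatorial factor from choosing the support is dominated by one factor of $L$ and one factor of $p$.
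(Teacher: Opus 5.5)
The paper gives no proof of this lemma. It imports it from Chen et al., whose argument is exactly the grid discretization you describe: round the at most $K$ active parameters, propagate the perturbation layer by layer, and multiply the number of sparsity patterns by the number of grid points. So the approach is right, but the step you yourself flag as the main obstacle does not go through as written.

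With your Step~1 bound $\varepsilon L(pB+2)\kappa^{L-1}p^{L}$, each active weight costs $2\kappa/\varepsilon = 2L(pB+2)\kappa^{L}p^{L}/\delta$ grid points. The target $2L^2(pB+2)\kappa^Lp^{L+1}/\delta$ therefore leaves only a factor $Lp$ per weight for the sparsity pattern, whereas $\binom{L(p^2+p)}{K}$ needs about $L(p^2+p)$ per weight. What you have actually proved is the bound with an extra factor $(p+1)^K$. This factor cannot be absorbed. The pattern count really is $\binom{L(p^2+p)}{K}$; already for $K=1$ it equals $L(p^2+p)>Lp$. Counting layer by layer gives the same number by Vandermonde's identity, so there is no spare power of $p$ to merge with.

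The missing point is that your own recursion yields one power of $p$ fewer than you state. From $\|\mathbf{h}_{\ell-1}\|_\infty\le (p\kappa)^{\ell-1}B+\kappa\sum_{j=0}^{\ell-2}(p\kappa)^j$ you get
\[
p\|\mathbf{h}_{\ell-1}\|_\infty+1\le p(p\kappa)^{\ell-1}B+\sum_{j=0}^{\ell-1}(p\kappa)^j\le (pB+2)(p\kappa)^{\ell-1}\qquad (p\kappa\ge 2).
\]
The error injected at layer $\ell$ is then amplified by $(p\kappa)^{L-\ell}$. Hence every layer contributes exactly $\varepsilon(pB+2)(p\kappa)^{L-1}$, and $\|f-f'\|_{L^{\infty,\infty}}\le \varepsilon L(pB+2)(p\kappa)^{L-1}$, with $p^{L-1}$ rather than $p^L$.

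With this choice of $\varepsilon$, the product $\binom{L(p^2+p)}{K}(2\kappa/\varepsilon)^K$ misses the target only by $(1+1/p)^K$. You can remove this residual, for instance, by using an $\varepsilon$-net of $[-\kappa,\kappa]$ with $\lceil\kappa/\varepsilon\rceil$ points (spacing $2\varepsilon$). Then $(p^2+p)\lceil\kappa/\varepsilon\rceil\le 2p^2\kappa/\varepsilon$ as soon as $p\ge2$ and $\kappa/\varepsilon\ge3$.

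Drop the factor-$2$ internal-cover remark as well. The paper uses the van der Vaart--Wellner covering number, whose centers need not lie in the class, and halving $\delta$ would again break the stated constant. For the paper's purposes this is cosmetic, since only $\log\mathcal{Q}$ up to constants enters the proof of Theorem~\ref{thm1}. It does matter for the inequality as stated.
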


We also define the class of linear latent dynamics:
\[
	\mathcal{F}_{\mathrm{Lin}}^{A} :=  \bigl\{z\mapsto \mathbf{A}z\mid \mathbf{A}\in\mathbb{R}^{n\times n},\ \|\mathbf{A}\|_{2}\le \rho\bigr\},
\]
where $0<\rho\leq 1$ is a fixed constant.
	
\begin{lemma}\label{lem_EAD_approx}
Suppose Assumption~\ref{assum1} holds. For any $0<\varepsilon<1$, there exist $\widetilde{\mathscr E}\in\mathcal{F}_{\mathrm{NN}}^{\mathscr{E}} = \mathcal{F}\left(D, n ; L_{\mathscr{E}}, p_{\mathscr{E}}, K_{\mathscr{E}}, \kappa_{\mathscr{E}}, R_{\mathscr{E}}\right)$, $\widetilde{\mathbf A}\in\mathbb{R}^{n\times n}$ with $\|\widetilde{\mathbf A}\|_2\le\rho \red{\ (\rho \le 1)}$ and $\widetilde{\mathscr D}\in \mathcal{F}_{\mathrm{NN}}^{\mathscr{D}} = \mathcal{F}\left(n, D;L_{\mathscr{D}}, p_{\mathscr{D}}, K_{\mathscr{D}}, \kappa_{\mathscr{D}}, R_{\mathscr{D}}\right)$ such that
	\begin{equation}\label{eq:LAE_uniform_approx}
		\sup_{\mathbf{x}\in\mathcal{M}}
		\big\| \widetilde{\mathscr D}\!\left(\widetilde{\mathbf A}\widetilde{\mathscr E}(\mathbf{x})\right) -\mathbf F(\mathbf{x}) \big\|_\infty \le \varepsilon.
		\end{equation}
	\end{lemma}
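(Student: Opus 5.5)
The plan is to build the three pieces separately from the structure in Assumption~\ref{assum1}, using $\mathbf F=\phi\circ\mathbf A^\star\circ\varphi$ on $\mathcal M$. Take $\widetilde{\mathbf A}=\mathbf A^\star$, so $\|\widetilde{\mathbf A}\|_2\le\rho$ holds at once and all the approximation error sits in the encoder and decoder. Then choose $\widetilde{\mathscr E}$ as a ReLU approximation of the chart $\varphi$ and $\widetilde{\mathscr D}$ as a ReLU approximation of $\phi$, and control the composition by a triangle inequality.

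\textbf{Step 1 (encoder).} The map $\varphi:\mathcal M\to[-\Lambda,\Lambda]^n$ is Lipschitz with constant $C_\varphi$. Extend each of its components to a Lipschitz function on a neighbourhood of $\mathcal M$, or on $[-B,B]^D$, by McShane extension. Then apply the manifold approximation result of \cite{LIU2024DAE} (equivalently \cite{Chen2022cover}) for Lipschitz functions on an $n$-dimensional compact manifold. This gives $\widetilde{\mathscr E}\in\mathcal F^{\mathscr E}_{\mathrm{NN}}$ with $\sup_{\mathcal M}\|\widetilde{\mathscr E}-\varphi\|_\infty\le\varepsilon_1$. The depth grows like $\log(1/\varepsilon_1)$ and the number of parameters like $\varepsilon_1^{-n}\log(1/\varepsilon_1)$, up to factors in $D$.

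\textbf{Step 2 (decoder).} The map $\phi$ is $C_\phi$-Lipschitz on $\varphi(\mathcal M)\subset[-\Lambda,\Lambda]^n$. Extend each of its $D$ coordinates to a Lipschitz function on the enlarged cube $[-\Lambda-1,\Lambda+1]^n$. This enlargement matters because $\widetilde{\mathbf A}\widetilde{\mathscr E}(\mathbf x)$ may lie slightly outside $\varphi(\mathcal M)$. Then use the standard Yarotsky-type approximation of Lipschitz functions on a cube in $\mathbb R^n$ to get $\widetilde{\mathscr D}$ with $\sup\|\widetilde{\mathscr D}-\bar\phi\|_\infty\le\varepsilon_2$ on that cube, where $\bar\phi$ is the extension. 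The extension keeps its Lipschitz constant, at most $\sqrt D\,C_\phi$ in the $\ell_2\to\ell_\infty$ sense; coordinatewise McShane extension gives a constant $C_\phi$ for each coordinate.

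\textbf{Step 3 (composition).} For $\mathbf x\in\mathcal M$ write $\mathbf z=\varphi(\mathbf x)$ and $\widetilde{\mathbf z}=\mathbf A^\star\widetilde{\mathscr E}(\mathbf x)$. Then
\[
\|\widetilde{\mathscr D}(\widetilde{\mathbf z})-\phi(\mathbf A^\star\mathbf z)\|_\infty\le\|\widetilde{\mathscr D}(\widetilde{\mathbf z})-\bar\phi(\widetilde{\mathbf z})\|_\infty+\|\bar\phi(\widetilde{\mathbf z})-\bar\phi(\mathbf A^\star\mathbf z)\|_\infty .
\]
The first term is at most $\varepsilon_2$. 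The second is at most $C_\phi\|\mathbf A^\star(\widetilde{\mathscr E}(\mathbf x)-\mathbf z)\|_2\le C_\phi\rho\sqrt n\,\varepsilon_1$, using $\|\mathbf A^\star\|_2\le\rho\le1$. Two facts are needed here. First, $\mathbf A^\star\mathbf z\in\varphi(\mathcal M)$, because $\phi(\mathbf A^\star\varphi(\mathbf x))=\mathbf F(\mathbf x)$ must be defined, so $\bar\phi=\phi$ there. Second, $\widetilde{\mathbf z}$ lies in the enlarged cube once $\sqrt n\,\varepsilon_1\le1$. Choosing $\varepsilon_2=\varepsilon/2$ and $\varepsilon_1=\varepsilon/(2C_\phi\sqrt n)$ then gives \eqref{eq:LAE_uniform_approx}.

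\textbf{Main obstacle.} The delicate point is Step~1: approximating a function defined only on $\mathcal M$ with a network whose size depends on the intrinsic dimension $n$ rather than on $D$. I would not redo this. Instead I would invoke the single-chart approximation theorem of \cite{LIU2024DAE} directly, which handles the manifold via its reach and a partition of unity. Any bound needed later on the output ranges $R_{\mathscr E}$, $R_{\mathscr D}$ can be obtained by clipping the outputs with ReLU layers at $\Lambda+1$ and $B+1$.
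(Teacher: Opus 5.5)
Your proposal is correct and follows the paper's own argument: set $\widetilde{\mathbf A}=\mathbf A^\star$, approximate $\varphi$ by an encoder network via a manifold approximation theorem, Lipschitz-extend $\phi$ and approximate it with a Yarotsky-type decoder, then combine by the triangle inequality using the bound $C_\phi\rho\sqrt n\,\varepsilon_1$ (the paper extends $\phi$ with Kirszbraun, you extend coordinatewise with McShane). Enlarging the decoder's domain to $[-\Lambda-1,\Lambda+1]^n$ is a worthwhile refinement: the paper guarantees the decoder approximation only on $[-\Lambda,\Lambda]^n$, yet evaluates it at $\mathbf A^\star\widetilde{\mathscr E}(\mathbf x)$, which may lie outside that cube by up to $\rho\sqrt n\,\varepsilon_1$.
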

\begin{proof}
	By \cite[Theorem~2.2]{CLONINGER2021encoder}, for any $0<\varepsilon<1$, let $\varepsilon_1 =\frac{\varepsilon}{(1+C_\phi\rho\sqrt n)}<1$ , there exists an encoder network $\widetilde{\mathscr E}$ such that
	\[
	\sup_{\mathbf{x}\in\mathcal M}\|\widetilde{\mathscr E}(\mathbf{x})-\varphi(\mathbf{x})\|_\infty \le \varepsilon_1.
	\]
	Moreover, by Kirszbraun's theorem \cite{Kirszbraun1934}, the inverse map $\phi=\varphi^{-1}$ admits a Lipschitz extension (still denoted by $\phi$) to $[-\Lambda,\Lambda]^{n}$ with Lipschitz constant $C_\phi$. Applying the approximation result of \cite[Lemma 8]{YAROTSKY2017decoder} to this extended $\phi$, there exists a decoder network $\widetilde{\mathscr D}$ such that
	\[
	\sup_{\mathbf{z}\in[-\Lambda,\Lambda]^n}\|\widetilde{\mathscr D}(\mathbf{z})-\phi(\mathbf{z})\|_\infty \le \varepsilon_1.
	\]
	Set $\widetilde{\mathbf A}=\mathbf A^\star$. For any $\mathbf{x}\in\mathcal M$, letting $u=\mathbf A^\star\widetilde{\mathscr E}(\mathbf{x})$ and $v=\mathbf A^\star \varphi(\mathbf{x})$, we have
	\[
	\|\widetilde{\mathscr D}(u)-\mathbf F(\mathbf{x})\|_\infty =\|\widetilde{\mathscr D}(u)-\phi(v)\|_\infty \le \|\widetilde{\mathscr D}(u)-\phi(u)\|_\infty + \|\phi(u)-\phi(v)\|_\infty .
	\]
	Using Lipschitz continuity of $\phi$, we obtain
	\[
	\|\phi(u)-\phi(v)\|_\infty \le \|\phi(u)-\phi(v)\|_2 \le C_\phi\|\mathbf A^\star\|_2\,\|\widetilde{\mathscr E}(x)-\varphi(x)\|_2 \le C_\phi\rho\sqrt n\,\varepsilon_1.
	\]
	Hence
	\[\sup\limits_{\mathbf{x}\in\mathcal M}\|\widetilde{\mathscr D}(\mathbf A^\star \widetilde{\mathscr E} (\mathbf{x})) - \mathbf F(\mathbf{x})\|_\infty \le \varepsilon.\]
	\end{proof}

	We now present the proof of Theorem~ \ref{thm1}.
	\begin{proof}[Proof of Theorem \ref{thm1}]
		To simplicity the notations, for any given $\mathcal{F}_{\mathrm{NN}}^{\mathscr{E}}$, $\mathcal{F}_{\mathrm{Lin}}^{{A}}$ and $\mathcal{F}_{\mathrm{NN}}^{\mathscr{D}}$, we define the network class
		\begin{align*}
			\mathcal{F}_{\mathrm{NN}}^{\mathscr{G}} &= \{\mathscr{G}=\mathscr{D}\circ {\mathbf{A}}\circ {\mathscr{E}}\mid  \mathscr{D}\in \mathcal{F}_{\mathrm{NN}}^{\mathscr{D}},\ \mathbf{A}\in \mathcal{F}_{\mathrm{Lin}}^{{A}},\ \mathscr{E}\in \mathcal{F}_{\mathrm{NN}}^{\mathscr{E}} \}~,   \\
			\mathcal{F}_{\mathrm{AE}}^{\mathscr{H}} &= \{\mathscr{H}= {\mathbf{A}}\circ {\mathscr{E}}\mid  \mathbf{A}\in \mathcal{F}_{\mathrm{Lin}}^{{A}},\ \mathscr{E}\in \mathcal{F}_{\mathrm{NN}}^{\mathscr{E}} \} ~, \\
			\mathcal{F}_{\mathrm{NN}}^{\mathscr{D}\mathscr{E}} &= \{\mathscr{D}\mathscr{E}= \{\mathscr{D}\circ {\mathscr{E}}\mid  \mathscr{D}\in \mathcal{F}_{\mathrm{NN}}^{\mathscr{D}},\ \mathscr{E}\in \mathcal{F}_{\mathrm{NN}}^{\mathscr{E}} \}\} ~,
		\end{align*}
		and denote $\widehat{\mathscr{G}}=\widehat{\mathscr{D}}\circ \widehat{\mathbf{A}}\circ \widehat{\mathscr{E}}$, and $\widehat{\mathscr{H}}= \widehat{\mathbf{A}}\circ \widehat{\mathscr{E}}$. 
		
		We decompose the generalization error into a empirical error term and a estimation error term
    \begin{align*}  \red{\mathbb{E}_{\mathcal{S}}\mathbb{E}_{(\mathbf{x},\mathbf{x}^+)\sim\mathbb{P}_{X,X^+}} }& \red{ \Bigl[  \| \widehat{\mathscr{G}}(\mathbf{x}) - \mathbf{x}^+ \|_2^2 + \| \widehat{\mathscr{D}}\circ \widehat{\mathscr{E}}(\mathbf{x}) - \mathbf{x}\|_2^2} 
    \red{+ \| \widehat{\mathscr{H}}(\mathbf{x}) - \widehat{\mathscr{E}}(\mathbf{x}^+)\|_2^2 + R(\widehat{\mathbf{A}}) \Bigr]} \\
     \red{=}& \red{\chi_1 + \chi_2~.}
    \end{align*}
	where
    \begin{align*}
        \red{\chi_1 =}& \, \red{C_1\,\mathbb{E}_{\mathcal{S}} \Bigl[ \frac{1}{N}\sum_{i=1}^N \Bigl(   \| \widehat{\mathscr{G}}(\mathbf{x}_i) -\mathbf{x}_i^+\|_2^2 + \| \widehat{\mathscr{D}}\circ \widehat{\mathscr{E}}(\mathbf{x}_i) - \mathbf{x}_i\|_2^2 +\, \| \widehat{\mathscr{H}}(\mathbf{x}_i) - \widehat{\mathscr{E}}(\mathbf{x}_i^+)\|_2^2 \Bigr) \Bigr] }\\
        &\red{+\, R(\widehat{\mathbf{A}}), }\\
		\red{\chi_2  =}&\, \red{\mathbb{E}_{\mathcal{S}}\mathbb{E}_{(\x,\x^+)\sim \mathbb{P}_{X,X^+}} \| \widehat{\mathscr{G}}(\mathbf{x}) - \mathbf{x}^+\|_2^2 - C_1\,\mathbb{E}_{\mathcal{S}}\left[ \frac{1}{N}\sum_{i=1}^N\| \widehat{\mathscr{G}}(\mathbf{x}_i) - \mathbf{x}_i^+\|_2^2 \right] }\\
		&\red{ +\,  \mathbb{E}_{\mathcal{S}}\mathbb{E}_{(\x,\x^+)\sim \mathbb{P}_{X,X^+}} \| \widehat{\mathscr{D}}\circ \widehat{\mathscr{E}}(\mathbf{x}) - \mathbf{x}\|_2^2 - C_1\,\mathbb{E}_{\mathcal{S}}\left[ \frac{1}{N}\sum_{i=1}^N  \| \widehat{\mathscr{D}}\circ \widehat{\mathscr{E}}(\mathbf{x}_i) - \mathbf{x}_i\|_2^2 \right]} \\
		& \red{+\, \mathbb{E}_{\mathcal{S}}\mathbb{E}_{(\x,\x^+)\sim \mathbb{P}_{X,X^+}} \| \widehat{\mathscr{H}}(\mathbf{x}) - \widehat{\mathscr{E}}(\mathbf{x}^+)\|_2^2- \, C_1\,\mathbb{E}_{\mathcal{S}}\left[ \frac{1}{N}\sum_{i=1}^N \| \widehat{\mathscr{H}}(\mathbf{x}_i) - \widehat{\mathscr{E}}(\mathbf{x}_i^+) \|_2^2\right].}	
		\end{align*}
		Let $\widetilde{\mathscr{E}}$, and $\widetilde{\mathscr{D}}$ be the networks in Lemma \ref{lem_EAD_approx} with accuracy $\varepsilon_1 =\frac{\varepsilon}{1+C_\phi\rho\sqrt n}<1$.  For the first term $\red{C_1}\mathbb{E}_{\mathcal{S}}\left[\frac{1}{N}\sum\limits_{i=1}^N\| \widehat{\mathscr{G}}(\mathbf{x}_i) - \red{\mathbf{x}_i^+} \|_2^2 \right]$ of $\chi_1$, we have
		\begin{align}\label{chi1_1_result}
			& \red{C_1}\,\mathbb{E}_{\mathcal{S}}\left[\frac{1}{N}\sum_{i=1}^N\| \widehat{\mathscr{G}}(\mathbf{x}_i) - \red{\mathbf{x}_i^+}\|_2^2 \right] \notag\\
            \le &\ \red{C_1}\, \mathbb{E}_{\mathcal{S}}\inf_{\mathscr{G}\in\mathcal{F}_{NN}^{\mathscr{G}}} \left[\frac{1}{N}\sum_{i=1}^N\| \mathscr{G}(\mathbf{x}_i) - \mathbf{F}(\mathbf{x}_i)-\red{\bfw_i}\|_2^2 \right] \notag\\
			\leq &\ \red{C_1}\inf_{\mathscr{G}\in\mathcal{F}_{NN}^{\mathscr{G}}} \mathbb{E}_{\mathcal{S}} \left[\frac{1}{N}\sum_{i=1}^N\| \mathscr{G}(\mathbf{x}_i) - \mathbf{F}(\mathbf{x}_i)-\red{\bfw_i}\|_2^2 \right] \notag\\
			 \leq &\ \red{C_1}\, \mathbb{E}_{(\x,\x^+)\sim \mathbb{P}_{X,X^+}} \left[\|\widetilde{\mathscr{D}} \circ{\mathbf{A}^\star} \circ \widetilde{\mathscr{E}}(\mathbf{x})-\mathbf{F}(\mathbf{x})-\red{\bfw}\|_2^2\right] \notag\\
			 \leq & \ \red{C_1}\, D \sup _{\mathbf{x} \in \mathcal{M}}\|\widetilde{\mathscr{D}} \circ {\mathbf{A}^\star} \circ \widetilde{\mathscr{E}}(\mathbf{x})-\mathbf{F}(\mathbf{x})\|_{\infty}^2 \notag+\red{C_1 \E\|\bfw\|_2^2}\\
			 \leq &\ \red{C_1}\, D\, \varepsilon^2+\red{C_1\, \E\|\bfw\|_2^2} .
		\end{align}
		
		Similarly, for the second term of $\chi_1$, let $\varepsilon_1 =\min\{\frac{\varepsilon}{1+C_\phi\rho\sqrt n},\ \frac{\varepsilon}{1+C_\phi \sqrt n} \}$, we can obtain 
		\begin{align}\label{chi1_2_result}
			\red{C_1}\, \mathbb{E}_{\mathcal{S}}\left[ \frac{1}{N}\sum_{i=1}^N 	\| \widehat{\mathscr{D}}\circ \widehat{\mathscr{E}}(\mathbf{x}_i) - \mathbf{x}_i\|_2^2 \right] \leq  \red{C_1}\, D\, \varepsilon^2 .
		\end{align}
		For the last term of $\chi_1$, we can also derive the upper bound
		\begin{align*}
			& \ \red{C_1}\, \mathbb{E}_{\mathcal{S}}\left[\frac{1}{N}\sum_{i=1}^N\| \widehat{\mathscr{H}}(\mathbf{x}_i) - \widehat{\mathscr{E}}(\red{\mathbf{x}_i^+})\|_2^2 \right]\\
			 \le&\  \red{C_1}\inf_{\mathscr{H}\in\mathcal{F}_{AE}^{\mathscr{H}}} \mathbb{E}_{(\x,\x^+)\sim \mathbb{P}_{X,X^+}} \left[\| \mathscr{H}(\mathbf{x}) -{\mathscr{E}}\left(\mathbf{F}(\mathbf{x})+\red{\bfw}\right)\|_2^2 \right]\\
        \le&\  \red{C_1\inf_{\mathscr{H}\in\mathcal{F}_{AE}^{\mathscr{H}}} \mathbb{E}_{(\x,\x^+)\sim \mathbb{P}_{X,X^+}}\bigg\{ \frac{C_1}{C_1-1}\left[\| \mathscr{H}(\mathbf{x}) -{\mathscr{E}}(\mathbf{F}(\mathbf{x}))\|_2^2 \right]}\\
        &\qquad \qquad\qquad \qquad\qquad\qquad\red{+\, C_1\|{\mathscr{E}}(\mathbf{F}(\mathbf{x}))-\mathscr{E}(\mathbf{F}(\mathbf{x})+\red{\bfw})\|_2^2\bigg\}.}
		\end{align*}
		We next bound the first row of last step by evaluating it at the candidate $\widetilde{\mathscr{H}} := \widetilde{\mathbf A}\,\widetilde{\mathscr E}$. For any $\mathbf{x}\in\mathcal M$, by the triangle inequality,
		\begin{align*}
			\| \widetilde{\mathscr{H}}(\mathbf{x}) -\widetilde{\mathscr{E}}(\mathbf{F}(\mathbf{x}))\|_2 \le& \| {\mathbf{A}^\star}\widetilde{\mathscr{E}}(\mathbf{x}) - {\mathbf{A}^\star}\varphi(\mathbf{x})\|_2  + \|{\mathbf{A}^\star}\varphi(\mathbf{x}) -\widetilde{\mathscr{E}}(\mathbf{F}(\mathbf{x}))\|_2\\
			\le&  (\rho+1)\sqrt{n} \varepsilon_1 ~.
		\end{align*}

    \red{For the second row, for any $\mathbf{x}\in\mathcal M$, by the triangle inequality,}
		\begin{align*}
			&\red{\|\widetilde{{\mathscr{E}}}(\mathbf{F}(\mathbf{x}))-\widetilde{\mathscr{E}}(\mathbf{F}(\mathbf{x})+\bfw)\|_2 }\\
            \red{\le}&\ \red{ \| \widetilde{\mathscr{E}}(\mathbf{F}(\mathbf{x})) - \varphi(\mathbf{F}(\mathbf{x}))\|_2 + \|\varphi(\mathbf{F}(\mathbf{x})) - \varphi(\mathbf{F}(\mathbf{x}) + \bfw) \|_2 } \\
           &\quad \red{ +\, \|\varphi(\mathbf{F}(\mathbf{x})+\bfw) - \widetilde{\mathscr{E}}(\mathbf{F}(\mathbf{x})+\bfw)\|_2} \\
			\red{\le}&\  \red{2\sqrt{n} \varepsilon_1+C_{\varphi} \|\bfw\|_2 ~.}
		\end{align*}
		Choosing $\varepsilon_1 = \min\{\frac{\varepsilon}{1+C_\phi\rho\sqrt n},\ \frac{\varepsilon}{1+C_\phi \sqrt n},\ \frac{\epsilon}{(\rho+1)\sqrt{n}} \}$, it follows that
		\begin{align}\label{chi1_3_result}
			\red{C_1}\,\mathbb{E}_{\mathcal{S}}\left[\frac{1}{N}\sum_{i=1}^N\| \widehat{\mathscr{H}}(\mathbf{x}_i) - \widehat{\mathscr{E}}\left(\red{\mathbf{x}_i^+}\right)\|_2^2 \right] 
			\le\red{ C_2 \varepsilon^2+2C_1^2C_\varphi^2\E \|\bfw\|_2^2~,}
		\end{align}
\red{where constant $C_2 =   C_1^2\left(\frac{1}{C_1 - 1}+\frac{8}{(\rho+1)^2}\right) $}.

		As for the last term, simply note that $R(\widehat{\mathbf{A}})=0$ since $\mathbf{A}\in \mathcal{F}^A_{Lin}$. Thus, for the $\chi_1$, we can obtain
		\begin{align}\label{chi_1}
			\chi_1 \leq \red{(2C_1 D+C_2)\varepsilon^2 + C_1(1+2C_1C_\varphi^2)\E \|\bfw\|_2^2 }~.
		\end{align}
		
		We now bound the first contribution in $\chi_2$, defined as
		\[
		T_1 := \mathbb{E}_{\mathcal{S}}\mathbb{E}_{(\x,\x^+)\sim \mathbb{P}_{X,X^+}} \bigl\| \widehat{\mathscr{G}}(\mathbf{x}) - \red{\mathbf{x}^+} \bigr\|_2^2 - \red{C_1}\mathbb{E}_{\mathcal{S}}\!\left[ \frac{1}{N}\sum_{i=1}^N \bigl\| \widehat{\mathscr{G}}(\mathbf{x}_i) - \red{\mathbf{x}_i^+} \bigr\|_2^2\right].
		\]

        According to \cite[Lemma 2]{LIU2024DAE}, \red{the result can be extended from the case $C_1=2$ to any $C_1>1$ by a straightforward modification of the proof. In particular, replacing the constant $\frac{1}{8DB^2}$ in (66) therein with $\frac{C_1-1}{4C_1DB^2}$ yields that for any $0<\delta<1$, there exists a constant $C_3=\frac{35C_1}{2}$ such that}
        \begin{equation}\label{T2_1}
        T_1 \le \frac{\red{C_3} D B^2}{N} \log \mathcal{Q}\!\left( \frac{\delta}{\red{C_1} D B}, \mathcal{F}_{\mathrm{NN}}^{\mathscr{G}},\|\cdot\|_{L^{\infty,\infty}}\right) + \red{3C_1}\delta .
        \end{equation}
		It remains to estimate the covering number of the composite class $\mathcal{F}_{\mathrm{NN}}^{\mathscr{G}}$. Specifically, for any $\mathscr{G}_k = \mathscr{D}_k\circ \mathbf{A}_k \circ \mathscr{E}_k$, $k=1,2$, we have the uniform bound
		\begin{align*}
			&\,\|\mathscr G_1-\mathscr G_2\|_{L^{\infty,\infty}} \\
			=& \, \sup_{x\in\mathcal M}\|\mathscr{D}_1\circ \mathbf{A}_1\circ \mathscr{E}_1    - \mathscr{D}_2 \circ \mathbf{A}_2 \circ \mathscr{E}_1 \|_{L^{\infty,\infty}} \\
			\le &\, \sup_{\x\in\mathcal M}\Big\| \mathscr D_1\circ \mathbf A_1\circ \mathscr E_1(\x) -\mathscr D_2\circ \mathbf A_1 \circ  \mathscr E_1(\x)\Big\|_{\infty}
			 \\&\, + \sup_{\x\in\mathcal M}\Big\| \mathscr D_2 \circ  \mathbf A_1\circ \mathscr E_1(\x) -\mathscr D_2\circ \mathbf A_2\circ \mathscr E_2(\x)\Big\|_\infty \\
			\le&\; \|\mathscr D_1-\mathscr D_2\|_{L^{\infty,\infty}} +C_\phi\sup_{\x\in\mathcal M} \Big\|\mathbf A_1\circ\mathscr E_1(\x)-\mathbf A_2\circ\mathscr E_2(\x)\Big\|_2\\
			\le&\; \|\mathscr D_1-\mathscr D_2\|_{L^{\infty,\infty}} + C_\phi\sup_{x\in\mathcal M} \Big(  \|\mathbf A_1\|_2\,\|\mathscr E_1(\x)-\mathscr E_2(\x)\|_2 +\|\mathbf A_1-\mathbf A_2\|_2\,\|\mathscr E_2(\x)\|_2 \Big)\\
			\le&\; \|\mathscr D_1-\mathscr D_2\|_{L^{\infty,\infty}} + C_\phi\rho\sqrt n\,\|\mathscr E_1-\mathscr E_2\|_{L^{\infty,\infty}} + C_\phi n^{3/2}\Lambda\,\|\mathbf A_1-\mathbf A_2\|_{\infty,\infty}.
		\end{align*}
  Thus, we choose 
		\[
		\delta_{\mathscr{D}} := \frac{\delta}{3},\qquad \delta_{\mathscr{E}} := \frac{\delta}{3 C_\phi \rho \sqrt{n}},\qquad \delta_A := \frac{\delta}{3 C_\phi n^{3/2} \Lambda},
		\]
		so that a $\delta$-cover of $\mathcal{F}_{\mathrm{NN}}^{\mathscr{G}}$ can be constructed by combining covers of the individual components,
		\begin{align*}
		&\mathcal{Q}\!\left( \delta, \mathcal{F}_{\mathrm{NN}}^{\mathscr{G}}, \| \cdot\|_{L^{\infty,\infty}} \right)\\
		 \le&\, \mathcal{Q}\!\left( \delta_{\mathscr{E}}, \mathcal{F}_{\mathrm{NN}}^{\mathscr{E}}, \|\cdot\|_{L^{\infty,\infty}} \right) \cdot \mathcal{Q}\!\left( \delta_{\mathscr{D}}, \mathcal{F}_{\mathrm{NN}}^{\mathscr{D}}, \|\cdot\|_{L^{\infty,\infty}} \right) \cdot \mathcal{Q}\!\left( \delta_A, \mathcal{F}_{\mathrm{Lin}}^{A}, \|\cdot\|_{{\infty,\infty}} \right).
		\end{align*}
		
		It remains to bound the covering number of the linear class $\mathcal{F}_{\mathrm{Lin}}^{A}$ under the $\|\cdot\|_{\infty,\infty}$ norm. Since $\|A\|_{\infty,\infty}\le \|A\|_2$, we have $\mathcal{F}_{\mathrm{Lin}}^{A}\subset\{A:\|A\|_{\infty,\infty}\le \rho\}$. The covering number of the linear class $\mathcal{F}_{\mathrm{Lin}}^{A}$ under the $\|\cdot\|_{\infty,\infty}$ norm follows from standard entropy estimates for finite-dimensional $\ell_\infty$-balls \cite{Vaart1996cover}. Hence, for any  $0\le \delta_{A}\le \rho$,
		\begin{equation}\label{cover_A}
			\log\mathcal{Q}\!\left( \delta_A,\mathcal{F}_{\mathrm{Lin}}^{A},\|\cdot\|_{\infty,\infty} \right) \le \mathcal{O}\!\left( n^2\bigl(\log\rho + \log\delta_A^{-1}\bigr)\right).
		\end{equation}
		Applying Lemma~\ref{lemma_nncover} to the encoder and decoder classes and \eqref{cover_A} to the linear class yields
		\begin{align*}
			\log \mathcal{Q}\!\left( \frac{\delta}{\red{C_1}DB}, \mathcal{F}_{\mathrm{NN}}^{\mathscr{G}},\| \cdot\|_{L^{\infty,\infty}} \right) \le\; &\mathcal{O}\!\left( L_{\mathscr{E}}K_{\mathscr{E}} \bigl( \log p_{\mathscr{E}}+\log \kappa_{\mathscr{E}}+\log\delta_{\mathscr{E}}^{-1} \bigr) \right) \\
			&+ \mathcal{O}\!\left( L_{\mathscr{D}}K_{\mathscr{D}} \bigl(\log p_{\mathscr{D}}+\log \kappa_{\mathscr{D}}+\log\delta_{\mathscr{D}}^{-1} \bigr)\right) \\
			& + \mathcal{O}\!\left( n^2\bigl(\log\rho+\log\delta_A^{-1}\bigr) \right).
		\end{align*}
		Lemma ~\ref{lem_EAD_approx} also satisfies the architecture scaling in \cite[Lemma 6 and Lemma 8]{LIU2024DAE}, we finally obtain
		\begin{align*}
			&\log \mathcal{Q}\!\left( \frac{\delta}{\red{C_1}DB}, \mathcal{F}_{\mathrm{NN}}^{\mathscr{G}}, \| \cdot\|_{L^{\infty,\infty}} \right) \\
			\le\; & \mathcal{O}\!\Bigl( D\log^2 D\, \varepsilon^{-n}\log^4\varepsilon^{-1} + D\log^2 D \, \varepsilon^{-n}\log^3\varepsilon^{-1}\log\delta^{-1} + n^2\bigl(\log\rho+\log\delta^{-1}\bigr) \Bigr).
		\end{align*}
		Setting $\delta=\varepsilon=N^{-\frac{1}{n+2}}$, we obtain
		\begin{align}\label{T2_11}
			T_1 \le\mathcal{O}\!\left(D^2(\log^2 D)\,N^{-\frac{2}{n+2}}\log^4 N\right).
		\end{align}
		We next bound the remaining two contributions in $\chi_2$. Define 
		\begin{align*}
		T_2 := &\mathbb{E}_{\mathcal{S}}\mathbb{E}_{(\x,\x^+)\sim \mathbb{P}_{X,X^+}} \bigl\|\widehat{\mathscr D}\circ\widehat{\mathscr E}(\x)-\x\bigr\|_2^2 - \red{C_1}\mathbb{E}_{\mathcal{S}}\!\left[ \frac{1}{N}\sum_{i=1}^N \bigl\|\widehat{\mathscr D}\circ\widehat{\mathscr E}(\mathbf x_i)-\mathbf x_i\bigr\|_2^2 \right],\\
		T_3 :=& \mathbb{E}_{\mathcal{S}}\mathbb{E}_{(x,x^+)\sim \mathbb{P}_{X,X^+}} \bigl\|\widehat{\mathscr H}(\mathbf x)-\widehat{\mathscr E}( \red{\mathbf x^+})\bigr\|_2^2 - \red{C_1} \mathbb{E}_{\mathcal{S}}\!\left[ \frac{1}{N}\sum_{i=1}^N \bigl\|\widehat{\mathscr H}(\mathbf x_i)-\widehat{\mathscr E}(\red{\mathbf x^+_i})\bigr\|_2^2 \right].
		\end{align*}
		Similarly, \red{for any $0<\delta<1$,  we obtain the following}
		\begin{align}
			T_2 \le& \, \frac{\red{C_3} D B^2}{N}\log \mathcal{Q}\!\left(\frac{\delta}{\red{C_1} DB}, \mathcal{F}_{\mathrm{NN}}^{\mathscr{DE}},\|\cdot\|_{L^{\infty,\infty}}\right) +\red{3C_1}\delta \notag\\
            \leq& \, \mathcal{O}\!\left( D^2(\log^2 D)\,N^{-\frac{2}{n+2}}\log^4 N \right),\label{T2_2}\\ 
			T_3 \le& \, \frac{\red{C_3} n \Lambda^2}{N}\log \mathcal{Q}\!\left(\frac{\delta}{\red{C_1} n\Lambda}, \mathcal{F}_{\mathrm{AE}}^{\mathscr{H}},\|\cdot\|_{L^{\infty,\infty}}\right)  + \red{ 3C_1}\delta\notag\\
            \leq&\, \mathcal{O}\!\left( nD(\log^2D)N^{-\frac{2}{n+2}}\log^4 N \right).\label{T2_3}
		\end{align}
		Combining \eqref{T2_11}--\eqref{T2_3} with the above covering-number bounds, we conclude that
		\[
		\chi_2 = T_1+T_2+T_3 \le \mathcal{O}\!\left( D^2(\log^2 D)\,N^{-\frac{2}{n+2}}\log^4 N \right).
		\]
		
		Consequently,
		\begin{align*}
			&\mathbb{E}_{\mathcal{S}}\mathbb{E}_{(\x,\x^+)\sim \mathbb{P}_{X,X^+}} \Bigl[ \| \widehat{\mathscr{G}}(\mathbf{x}) -\red{\mathbf{x}^+}\|_2^2 + \| \widehat{\mathscr{D}}\circ\widehat{\mathscr{E}}(\mathbf{x}) - \mathbf{x}\|_2^2 + \| \widehat{\mathscr{H}}(\mathbf{x}) - \widehat{\mathscr{E}}(\red{\mathbf{x}^+})\|_2^2 \red{+ R(\widehat{\mathbf{A}})} \Bigr] \\
			\leq\;&  \red{(2C_1 D+C_2)N^{-\frac{2}{n+2}} + C_1(1+2C_1C_\varphi^2)\E \|\bfw\|_2^2 }   +   C_0 \,D^2(\log^2 D)\,N^{-\frac{2}{n+2}}\log^4 N \\
			\leq\;& \red{C_1(1+2C_1C_\varphi^2)\E \|\bfw\|_2^2} + \red{C_0}\,D^2(\log^2 D)\,N^{-\frac{2}{n+2}}\log^4 N,
		\end{align*}
		for some constant $\red{C_0}$ depending on $n, C_1, C_\varphi, C_\phi, \Lambda, B, \rho$, and the volume of $\mathcal{M}$, 	which completes the proof.
	\end{proof}
{\cred{
\section{Proof of Theorem~\ref{thm2}}\label{appenB}
\begin{proof}
Let $\mathcal F_{\mathrm{NN}}^{obs}(D_y,m;L_{obs},p_{obs},K_{obs},\kappa_{obs},R_{obs})$ denote the observation encoder class. We follow the same decomposition as in Theorem~\ref{thm1}:
\begin{align*}
\chi_1^{obs} &= C_1 \mathbb{E}_{\mathcal S} \left[ \frac{1}{N}\sum_{i=1}^N \| \widehat{\mathscr E}_{obs}(\y_i) - \mH\,\mathscr E(\x_i) \|_2^2 \right],\\
\chi_2^{obs} &= \mathbb{E}_{\mathcal S}\mathbb{E}_{(\x,\y)} \| \widehat{\mathscr E}_{obs}(\y) - \mH\,\mathscr E(\x) \|_2^2 - C_1 \mathbb{E}_{\mathcal S} \left[ \frac{1}{N}\sum_{i=1}^N \| \widehat{\mathscr E}_{obs}(\y_i) - \mH\,\mathscr E(\x_i) \|_2^2 \right].
\end{align*}
By optimality of $\widehat{\mathscr E}_{obs}$, 
\begin{align*}
\chi_1^{obs} \le C_1 \inf_{\mathscr E_{obs}\in \mathcal F_{\mathrm{NN}}^{obs}} \mathbb E_{(\x,\y)} \| \mathscr E_{obs}(\y) - \mH \mathscr E(\x) \|_2^2.
\end{align*}

By the approximation assumption on $\mathcal F_{\mathrm{NN}}^{obs}$, there exists
$\widetilde{\mathscr E}_{obs} \in \mathcal F_{\mathrm{NN}}^{obs}$ such that
\begin{align*}
\mathbb E_{(\x,\y)} \| \widetilde{\mathscr E}_{obs}(\y) - \mathbb E[\mH \mathscr E(\x)\mid \y] \|_2^2 \le \varepsilon^2.
\end{align*}
Using the standard bias–variance decomposition
\begin{align*}
\|\widetilde{\mathscr E}_{obs}(\mathbf{y}) - \mathbf{H}\,\mathscr E(\mathbf{x})\|_2^2 
\le\; 2 \|\widetilde{\mathscr E}_{obs}(\mathbf{y}) - \mathbb E[\mathbf{H}\,\mathscr E(\mathbf{x}) \mid \mathbf{y}] \|_2^2 
+ 2 \|\mathbb E[\mathbf{H}\,\mathscr E(\mathbf{x}) \mid \mathbf{y}] - \mathbf{H}\,\mathscr E(\mathbf{x})\|_2^2.
\end{align*}
Under a stability assumption on the observation operator $\mathcal H$, the conditional variance term can be controlled by the observation noise level. Thus, there exists a constant $C_1^{obs}= 2C_1 \|\mathbf H\|_2^2 C_\phi^2 C_{\mathcal H}^2$, such that
\begin{align*}
\chi_1^{obs} \le 2C_1\varepsilon^2 + C_1^{obs}\,\mathbb E\|\boldsymbol{\eta}\|_2^2 .
\end{align*}

The term $\chi_2^{obs}$ is controlled using the same symmetrization and covering number arguments as in Theorem~\ref{thm1}, giving 
\begin{align*}
\chi_2^{obs} \le \frac{C_3 m R_{obs}^2}{N} \log \mathcal Q\left( \frac{\delta}{C_1 m R_{obs}}, \mathcal F_{\mathrm{NN}}^{obs}, \| \cdot\|_{L^{\infty,\infty}}  \right) + 3C_1\delta.
\end{align*}
Using the covering number estimate for neural networks and choosing $\delta = \varepsilon$, we obtain
\begin{align*}
\chi_2^{obs} \le \mathcal O\Big( D_y \log^2 D_y \,\varepsilon^{-n}\log^4 \varepsilon^{-1} \Big),
\end{align*}
and if $\mH$ is learned, the hypothesis class is augmented by a linear operator in $\mathbb{R}^{m\times n}$, whose covering number contributes an additional $\mathcal O(mn \log \delta^{-1})$ term, affecting only the constants.


Combining the bounds yields
\begin{align*}
\mathbb{E}_{\mathcal S}\mathbb{E}_{(\x,\y)} \| \widehat{\mathscr E}_{obs}(\y) - \mH\,\mathscr E(\x) \|_2^2 \le C_0^{obs} D_y^2 \log^2 D_y N^{-\frac{2}{n+2}}\log^4 N + C_1^{obs}\,\mathbb E\|\tilde{\bfeta}\|_2^2,
\end{align*}
for some constant $C_0^{obs}>0$, which completes the proof.
\end{proof}
}}

{\cred{
\section{Algorithmic Details of Baseline Methods}
\begin{algorithm}[H]
\caption{\red{AE-EnKF}}
\cred
\label{alg:AE-EnKF}
\begin{algorithmic}[1]
\Require Encoder $\mathscr{E}$, decoder $\mathscr{D}$, initial ensemble $\{\widehat{\mathbf{x}}_0^{(j)}\}_{j=1}^{N_e}$
\For{$k = 1,\dots,K$}
\State \textbf{Forecast:} $\widehat{\mathbf{x}}_{k|k-1}^{(j)} = \mathscr{D}(\mathscr{E}(\widehat{\mathbf{x}}_{k-1}^{(j)}))$
\State \textbf{Analysis:}
Update $\widehat{\mathbf{x}}_{k|k-1}^{(j)}$ using EnKF with observation $\mathbf{y}_k$ and observation operator $\mathcal{H}$
\EndFor
\end{algorithmic}
\end{algorithm}
\begin{algorithm}[H]
\caption{\red{DAE-EnKF}}
\label{alg:DAE-EnKF}
\cred
\begin{algorithmic}[1]
\Require Encoder $\mathscr{E}$, decoder $\mathscr{D}$, latent model $\mathscr{A}$, initial ensemble $\{\widehat{\mathbf{x}}_0^{(j)}\}_{j=1}^{N_e}$
\State Encode: $\widehat{\mathbf{z}}_{0}^{(j)} = \mathscr{E}(\widehat{\mathbf{x}}_{0}^{(j)})$
\For{$k = 1,\dots,K$}
\State \textbf{Forecast:} $\widehat{\mathbf{z}}_{k|k-1}^{(j)} = \mathscr{A}(\widehat{\mathbf{z}}_{k-1}^{(j)})$
\State \textbf{Analysis:}
Update $\widehat{\mathbf{z}}_{k|k-1}^{(j)}$ using EnKF with observation $\mathbf{y}_k$ and observation operator $\mathcal{H} \circ \mathscr{D}$
\EndFor
\State Decode: $\widehat{\mathbf{x}}_k^{(j)} = \mathscr{D}(\widehat{\mathbf{z}}_k^{(j)})$
\end{algorithmic}
\end{algorithm}
}}

\end{document}